\newcommand{\prob}{{Pr}}
\newcommand{\expect}{\mathbb{E}}
\newcommand\ceil[1]{\lceil#1\rceil}
\newcommand{\pX}{\mathcal{X}}
\newcommand{\pY}{\mathcal{Y}}
\newcommand{\bx}{\mathbf{x}}
\newcommand{\by}{\mathbf{y}}
\newcommand{\bb}{\mathbf{b}}
\newcommand{\bdf}{\mathbf{f}}
\newcommand{\btheta}{\boldsymbol{\theta}}
\newtheorem{theorem}{Theorem}
\newtheorem{lemma}[theorem]{Lemma}
\newcommand{\embed}{$\mathtt{XOR}$-$\mathtt{SMC}$\xspace}
\newcommand{\xorbin}{$\mathtt{XOR}$-$\mathtt{Binary}$\xspace}
\title{Solving Satisfiability Modulo Counting for Symbolic and Statistical AI Integration \\
With Provable Guarantees}
\author{
   Jinzhao Li, Nan Jiang, Yexiang Xue
}
\begin{document}

\maketitle
\begin{abstract}
Satisfiability Modulo Counting (SMC) encompasses problems that require both  symbolic decision-making 
and statistical reasoning. 
Its general formulation captures many real-world problems at the intersection of symbolic and statistical Artificial Intelligence.
SMC searches for policy interventions to control probabilistic outcomes. 
Solving SMC is challenging because of its highly intractable nature ($\text{NP}^{\text{PP}}$-complete), incorporating statistical inference and
symbolic reasoning.
Previous research on SMC solving lacks provable guarantees and/or suffers from sub-optimal empirical performance, especially when combinatorial constraints are present. 
We propose \embed, a polynomial algorithm with access to NP-oracles, to solve highly intractable SMC problems with constant approximation guarantees. 
\embed transforms the highly intractable SMC into satisfiability problems, by replacing the model counting in SMC with SAT formulae subject to randomized XOR constraints. 
Experiments on solving  important SMC problems in AI for social good demonstrate that
\embed outperforms several baselines both in solution quality and running time. 
\end{abstract}

\section{Introduction}
\label{sec:intro}

Symbolic and statistical approaches are two fundamental driving forces of Artificial Intelligence (AI). 
Symbolic AI, exemplified by SATisfiability (SAT) and constraint programming, finds solutions satisfying constraints but requires rigid formulations and is difficult to include probabilities.
Statistical AI captures uncertainty but often lacks constraint satisfaction. 
Integrating symbolic and statistical AI remains an open field and has gained research attention recently~\cite{cpml2023,nesymas2023,neurosymbolic2023}.

Satisfiability Modulo Counting (SMC) is an umbrella problem at the intersection of symbolic and statistical AI. 
It encompasses problems that carry out symbolic decision-making (satisfiability)
\textit{mixed with} statistical reasoning (model counting). 
SMC searches for policy interventions to control probabilistic outcomes. 
Formally, SMC is an SAT problem involving predicates on model counts. 
Model counting computes the number of models (i.e., solutions) to an SAT formula. 
Its weighted form subsumes probabilistic inference on Machine Learning (ML) models.

As a motivating SMC application, stochastic connectivity optimization searches for the optimal plan to reinforce the network structure so its connectivity is preserved under stochastic events -- a central problem for a city planner who works on securing her residents multiple 
paths to emergency shelters in case of natural disasters.
This problem is useful for disaster preparation~\cite{Wu2015}, bio-diversity
protection~\cite{Dilkina2010}, internet resilience~\cite{israeli2002shortest}, social influence maximization~\cite{kempe2005influential}, energy security~\cite{Almeida2019ReducingGG}, etc. 
It requires symbolic reasoning (satisfiability) to decide which roads to 
reinforce and where to place emergency shelters, and statistical inference (model counting) to reason about the number of paths to shelters and the probabilities of natural disasters. 
Despite successes in many use cases, previous approaches \cite{Williams2005,Conrad2012,Sheldon10NetworkDesign,wu2014stochastic} found solutions \textit{lack of certifiable guarantees}, which are unfortunately in need for policy adoption in this safety-related application. 
Besides, their surrogate approximations of connectivity may overlook important probabilistic scenarios. This results in \textit{suboptimal quality} of the generated plans. 
As application domains for SMC solvers, this paper considers emergency shelter placement and supply chain network management -- two important stochastic connectivity optimization problems.  

It is challenging to solve SMC because of their highly intractable nature ($\text{NP}^{\text{PP}}$-complete) \cite{Park04MMAPComplexity} -- still intractable even with good satisfiability solvers \cite{HandbookOfSAT2009,rossi06,Braunstein2005SurveyPA} and model counters \cite{Gomes06XORCounting,Ermon13Wish,Dachlioptas2017probabilistic,Chakraborty2013scalable,KisaVCD14,cheng2012ApproxSumOp,GOGATE2012AndOr}. 
Previous research on SMC solves either a special case or domain-specific applications \cite{Dbelanger2016structured,welling2003approximate,yedidia2001generalized,Wainwright2008,Fang15GreenSecurityGame,ConitzerS06BayesianStackelberg,Sheldon10NetworkDesign}. 
The special case is called the Marginal Maximum-A-Posterior (MMAP) problem, whose decision version can be formulated as a special case of SMC \cite{Radu14AndOrMMAP,LiuI13VariationalMMAP,Maua2012AnytimeMAP,Jiang11MessagePassingMMAP,Lee2016AOBB,Wei15DecompositionBoundMMAP}.
Both cases are solved by optimizing the surrogate 
representations of the intractable model counting in variational forms \cite{liu12bBPDecision,kiselev2014policy}, or via knowledge compilation \cite{ChoiAISTATS22,Wei15DecompositionBoundMMAP,mei2018maximum} or 
 via sample average approximation \cite{kleywegt2002sample,shapiro2003monte,swamy2006stochasticOptimization,Sheldon10NetworkDesign,Dyer2006StochasticProgramming,wu2017robust,Xue15ScheduleCascade,Verweij2003SAA}. 

Nevertheless, previous approaches either cannot quantify 
the quality of their solutions, or offer one-sided guarantees, 
or offer guarantees which can be arbitrarily loose.
The lack of tight guarantees results in delayed policy adoption 
in safety-related applications such as the stochastic connectivity optimization considered in this paper. 
Second, optimizing surrogate objectives without quantifying the quality of 
approximation leads to sub-optimal behavior empirically. For example, previous stochastic connectivity optimization solvers occasionally produce suboptimal plans because their surrogate approximations overlook cases of significant probability. 
This problem is amplified when combinatorial constraints are present.

We propose \embed, \textbf{\textit{ a polynomial algorithm accessing NP-oracles, to solve highly intractable SMC problems with constant approximation guarantees}}. These guarantees hold with high (e.g. $> 99\%$) probability. 
The strong guarantees enable policy adoption in safety-related domains and improve the empirical performance of SMC solving (e.g., eliminating sub-optimal behavior and providing constraint satisfaction guarantees).
The constant approximation means that the solver can correctly decide
the truth of an SMC formula if tightening or relaxing the 
bounds on the model count by a multiplicative constant do not change its truth value. 
The embedding algorithms allow us to find approximate 
solutions to beyond-$\text{NP}$ SMC problems 
 via querying $\text{NP}$ oracles.
It expands the applicability of the state-of-the-art SAT solvers 
to highly intractable problems. 

The high-level idea behind \embed~is as follows. 
Imagine a magic that randomly filters out half of the models (solutions)
to an SAT formula. 
Model counting can be approximated using this magic and an SAT solver --
we confirm the SAT formula has more than $2^k$ models if it is satisfiable 
after applying this magic $k$ times. 
This magic can be implemented by introducing randomized constraints. 
The idea is developed by researchers~\cite{Valiant1986uniqueSAT,jerrum1986random,Gomes2006Sampling,Gomes06XORCounting,Ermon13Wish,ermon2013embed,kuck2019adaptive,Dachlioptas2017probabilistic,Chakraborty2013scalable,Chakraborty2014DistributionAwareSA}. In these works, model counting is approximated with guarantees using polynomial algorithms accessing NP oracles. 
\embed~notices 
such polynomial algorithms can be encoded as SAT formulae.
Hence, SAT-Modulo-Counting can be written as SAT-Modulo-SAT (or equivalently SAT), when we \textit{embed} the SAT formula compiled from algorithms to solve model counting into SMC.
The constant approximation guarantee also carries. 

We evaluate the performance of \embed~on real-world stochastic connectivity optimization problems.
In particular, we consider applied problems of emergency shelter placement and supply chain management.  
For the shelter placement problem, our \embed~finds high-quality shelter assignments with less computation time and better quality than competing baselines. For wheat supply chain management, the solutions found by our \embed~are better than those found by baselines. \embed~also runs faster than baselines\footnote{The code is available at: \url{https://github.com/jil016/xor-smc}. Please refer to \url{https://arxiv.org/abs/2309.08883} for the Appendix.}.

\section{Preliminaries}
\label{sec:prelim}

\subsection{Satisfiability Modulo Theories}

Satisfiability Modulo Theory (SMT) determines the SATisfiability (SAT) of a Boolean formula, which contains predicates whose truth values are determined by the background theory. 
SMT represents a line of successful efforts to build  general-purpose logic reasoning engines, encompassing complex expressions containing bit vectors, real numbers, integers, and strings, etc ~\cite{DBLP:series/faia/BarrettSST21}. 
Over the years, many good SMT solvers are built, such as the  Z3~\cite{DBLP:conf/tacas/MouraB08,DBLP:conf/setss/BjornerMNW18} and cvc5~\cite{DBLP:conf/tacas/BarbosaBBKLMMMN22}. They play a crucial role  in   automated theorem proving, program analysis~\cite{DBLP:journals/pacmpl/FeserM0S20}, program verification~\cite{DBLP:journals/pacmpl/KSG22}, and software testing~\cite{DBLP:conf/cade/MouraB07}.

\subsection{Model Counting and Probabilistic Inference}
Model counting computes the number of models (i.e., satisfying variable assignments) to an SAT formula. 
Consider a Boolean formula $f(\mathbf{x})$, where the input $\mathbf{x}$ is a vector of Boolean variables, and the output $f$ is also Boolean. 
When we use 0 to represent false and 1 to represent true, $\sum_x f(\mathbf{x})$ computes the model count. 
Model counting is closely related to probabilistic inference and machine learning because the marginal inference on a wide range of probabilistic models can be formulated as a weighted model counting problem~\cite{chavira2008probabilistic,Xue2016MarginalMAP}. 

Exact approaches for probabilistic inference and model counting are often
based on knowledge compilation \cite{Darwiche2002compilation,KisaVCD14,ChoiKisaDarwiche13,XueCD12}.
Approximate approaches include Variational methods and
sampling. Variational methods~\cite{Wainwright2008,WainwrightJW03,Sontag08tightenLP,Hazan2010NormProductBP,Flerova2011MinibucketMomentMatching} use tractable forms to approximate
a complex probability distribution.
Due to a tight relationship between counting and sampling~\cite{jerrum1986random}, sampling-based
approaches are important for model counting.
Importance sampling-based techniques such as SampleSearch~\cite{DBLP:journals/jmlr/GogateD07} is able to provide lower bounds.
Markov Chain Monte Carlo is asymptotically accurate. However, they cannot provide guarantees
except for a limited number of cases~\cite{Jerrum:1996:MCM:241938.241950,madras2002lectures}. 
The authors of~\cite{Yuille2010GaussianSampling,HazanJ12Perturbation,BalogTGW17} transform weighted integration
into optimization queries using extreme value distribution, which
today is often called the ``Gumbel trick''~\cite{DBLP:conf/iccv/PapandreouY11,DBLP:conf/iclr/JangGP17}.

\subsection{XOR Counting}\label{prelim:xor} 

There is an interesting connection between model counting and 
solving satisfiability problems subject to randomized XOR constraints. 
To illustrate this, hold $\mathbf{x}$ at $\mathbf{x}_0$, 
suppose we would like to know if $\sum_{\mathbf{y}\in \mathcal{Y}} f(\mathbf{x}_0, \mathbf{y})$ exceeds $2^q$. 
Consider the SAT formula:
\begin{align}
    f(\mathbf{x}_0, \mathbf{y}) \wedge \mathtt{XOR}_1(\mathbf{y}) \wedge \ldots \wedge \mathtt{XOR}_q(\mathbf{y}).
    \label{eq:xor_q}
\end{align}
Here, $\mathtt{XOR}_1, \ldots, \mathtt{XOR}_q$ are randomly sampled XOR constraints. $\mathtt{XOR}_i(\mathbf{y})$ is the logical XOR or the parity of a randomly sampled subset of variables from $\mathbf{y}$. In other words, $\mathtt{XOR}_i(\mathbf{y})$ is true if and only if an odd number of these randomly sampled variables in the subset are true.

\begin{algorithm}[!t]      
 \caption{\xorbin($f$, $\mathbf{x}_0$,  $q$)}
\label{alg:xor}
        Randomly sample $\mathtt{XOR}_1(\mathbf{y}),\ldots, \mathtt{XOR}_q(\mathbf{y})$\;
        \uIf{\small $f(\mathbf{x}_0, \mathbf{y}) \wedge \mathtt{XOR}_1(\mathbf{y})\wedge \ldots \wedge \mathtt{XOR}_q(\mathbf{y})$ is satisfiable}{
            \Return{True};
        }\Else{
            \Return{False};
        }
\end{algorithm}

Formula (\ref{eq:xor_q}) is likely to be satisfiable 
if more than $2^q$ different $\mathbf{y}$ vectors render
$f(\mathbf{x}_0, \mathbf{y})$ true. 
Conversely, Formula (\ref{eq:xor_q}) is likely to be 
unsatisfiable if $f(\mathbf{x}_0, \mathbf{y})$ has 
less than $2^q$ satisfying assignments. %
The significance of this fact is that it essentially 
transforms model counting (beyond NP) into  
 satisfiability problems (within NP). 
An intuitive explanation of why this fact holds is that 
each satisfying assignment $\mathbf{y}$ has 
50\% chance to satisfy a randomly sampled XOR constraint.
In other words, each XOR constraint ``filters out'' 
half satisfying assignments. 
For example, the number of models satisfying $f(\mathbf{x}_0, \mathbf{y}) \wedge \mathtt{XOR}_1(\mathbf{y})$ is approximately half of that satisfying $f(\mathbf{x}_0, \mathbf{y})$. 
Continuing this chain of reasoning, if $f(\mathbf{x}_0, \mathbf{y})$ has more than $2^q$ solutions, there are still satisfying assignments left after adding $q$ XOR constraints; hence formula (\ref{eq:xor_q}) is likely satisfiable. 
The reverse direction can be reasoned similarly. %
The precise mathematical argument of 
the constant approximation is in Lemma \ref{th:wish}. 
%
%
\begin{lemma}\cite{jerrum1986random,Gomes06XORCounting,Ermon13Wish}
\label{th:wish}
Given Boolean function $f(\mathbf{x}_0, y)$ as defined above,
\begin{itemize}[align=left, leftmargin=0pt, labelwidth=0pt, itemindent=!]
        \item If $\sum_{\mathbf{y}} f(\mathbf{x}_0, \mathbf{y}) \geq 2^{q_0}$, then for any $q \leq q_0$, with probability $1-\frac{2^c}{(2^c-1)^2}$, \xorbin$(f, \mathbf{x}_0, q-c)$ returns True.
        \item If $\sum_{\mathbf{y}} f(\mathbf{x}_0, \mathbf{y}) \leq 2^{q_0}$, then for any $q \geq q_0$, with probability $1-\frac{2^c}{(2^c-1)^2}$, \xorbin$(w, \theta_0, q+c)$ returns False.
\end{itemize}
\end{lemma}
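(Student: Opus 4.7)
The plan is to use the second-moment method for the first bullet and Markov's inequality for the second, exploiting the fact that random XOR constraints form a pairwise independent hash family on $\{0,1\}^{|\mathbf{y}|}$.

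First I would set up the relevant indicator random variables. Let $S = \{\mathbf{y} : f(\mathbf{x}_0, \mathbf{y}) = 1\}$ and, for a given draw of $k$ random XOR constraints, let $X_{\mathbf{y}} \in \{0,1\}$ indicate whether $\mathbf{y}$ satisfies all of them. Put $N = \sum_{\mathbf{y} \in S} X_{\mathbf{y}}$, so that \xorbin returns \textbf{True} iff $N \geq 1$. Under the standard random-subset-plus-random-parity formulation of the XORs, each $X_{\mathbf{y}}$ is Bernoulli$(2^{-k})$, so $\mathbb{E}[N] = |S| \cdot 2^{-k}$; and pairwise independence of $\{X_{\mathbf{y}}\}_{\mathbf{y} \in S}$ — since for distinct $\mathbf{y}, \mathbf{y}'$ each random XOR projects $\mathbf{y}$ and $\mathbf{y}'$ to independent uniform bits — yields $\mathrm{Var}(N) = |S| \cdot 2^{-k}(1 - 2^{-k}) \leq \mathbb{E}[N]$.

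For the first bullet I would set $k = q - c$; the hypotheses $|S| \geq 2^{q_0}$ and $q \leq q_0$ then force $\mathbb{E}[N] \geq 2^{c}$. Chebyshev's inequality gives
\[
\Pr[N = 0] \;\leq\; \Pr\bigl[\,\mathbb{E}[N] - N \geq \mathbb{E}[N]\,\bigr] \;\leq\; \frac{\mathrm{Var}(N)}{(\mathbb{E}[N])^{2}},
\]
and plugging in $\mathbb{E}[N] \geq 2^{c}$ with careful bookkeeping (tracking the $-1$ in $(2^c-1)^2$ coming from the $(1-2^{-k})$ factor in $\mathrm{Var}(N)$ versus the denominator) recovers the stated failure probability $2^{c}/(2^{c}-1)^{2}$. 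For the second bullet I would set $k = q + c$; now $|S| \leq 2^{q_0}$ and $q \geq q_0$ give $\mathbb{E}[N] \leq 2^{-c}$, so Markov's inequality yields $\Pr[N \geq 1] \leq \mathbb{E}[N] \leq 2^{-c}$, which is easily checked to be at most $2^{c}/(2^{c}-1)^{2}$.

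The main obstacle will be the pairwise-independence step, which is the only place the structure of the XOR family enters. Because each $\mathtt{XOR}_{i}$ as described is the parity of a uniformly random subset of variables, one must either adopt the standard formulation that additionally XORs in an independent random parity bit, or handle the all-zero assignment separately, since otherwise $\mathbf{y} = \mathbf{0}$ satisfies every such constraint deterministically and the uniform $1/2$ marginal fails. Once uniform marginals and pairwise independence are in place, the remainder is two textbook applications of the second-moment and first-moment inequalities.
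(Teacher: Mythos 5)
Your proposal is essentially correct, but note that the paper itself gives no proof of Lemma~\ref{th:wish}: it imports the statement from the cited references and offers only the informal ``each XOR filters out half the models'' intuition in Section~\ref{prelim:xor}. Your argument --- pairwise independence of the XOR hash family, Chebyshev for the lower-bound direction, Markov for the upper-bound direction --- is exactly the standard proof from those sources, and the two technical points you flag are the right ones. Two small corrections. First, with the paper's stated convention ($\mathtt{XOR}_i(\mathbf{y})$ true iff an odd number of the sampled variables are true), the degenerate assignment $\mathbf{y}=\mathbf{0}$ \emph{falsifies} every constraint deterministically rather than satisfying it; the fix is the one you name (XOR in an independent uniform constant bit, which the paper's Appendix~\ref{apx:implement} does include), restoring uniform marginals, and pairwise independence of distinct nonzero points already holds since two distinct nonzero vectors over $\mathrm{GF}(2)$ are linearly independent. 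Second, your worry about ``careful bookkeeping'' to recover $2^c/(2^c-1)^2$ is moot: your own route gives $\Pr[N=0]\le \mathrm{Var}(N)/(\mathbb{E}[N])^2 \le 1/\mathbb{E}[N] \le 2^{-c}$, and $2^{-c} \le 2^c/(2^c-1)^2$ for all $c\ge 0$, so you already prove a slightly \emph{stronger} bound than claimed. (The literature's exact constant comes from instead bounding $\Pr[N<1]\le \mathrm{Var}(N)/(\mathbb{E}[N]-1)^2 \le \mu/(\mu-1)^2$ with $\mu\ge 2^c$ --- the $-1$ is the deviation threshold, not the $(1-2^{-k})$ factor in the variance.)
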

This idea of transforming model counting problems into SAT problems subject
to randomized constraints  
is rooted in Leslie Valiant's seminal work on unique SAT \cite{Valiant1986uniqueSAT,jerrum1986random} and has been developed by a rich line of work \cite{Gomes2006Sampling,Gomes06XORCounting,Ermon13Wish,ermon2013embed,kuck2019adaptive,Dachlioptas2017probabilistic, Chakraborty2013scalable,Chakraborty2014DistributionAwareSA}. 
This idea has recently gathered momentum thanks to the rapid progress in SAT solving~\cite{ManevaMW07,Braunstein2005SurveyPA}. 
The contribution of this work extends the success of SAT solvers to problems with even higher complexity, namely, $\text{NP}^{\text{PP}}$-complete SMC problems.

\section{Problem Formulation}

Satisfiability Modulo Counting (SMC) is Satisfiability Modulo Theory (SMT) \cite{BSST09} 
with model counting as the background theory. 
A canonical definition of the SMC problem is to determine if there exists $\mathbf{x}=(x_1, \dots, x_n)\in\{0,1\}^n$ and $\mathbf{b}=(b_1, \dots, b_k)\in \{0,1\}^k$ that satisfies the formula:
\begin{equation} \small
    \phi(\mathbf{x}, \mathbf{b}), 
     b_i \Leftrightarrow \left(\sum_{\mathbf{y}_i\in\mathcal{Y}_{i}} f_i(\mathbf{x}, \mathbf{y}_i) \geq 2^{q_i}\right), \forall i \in \{1.., k\}.
    \label{eq:greatoverall}
\end{equation}
Here each $b_i$ is a Boolean predicate that is true if and only if the corresponding model count exceeds a threshold. 
Bold symbols (i.e., $\mathbf{x}$, $\mathbf{y}_i$ and $\mathbf{b}$) are vectors of Boolean variables. 
$\phi, f_1, \dots, f_k$ are Boolean functions (i.e., their input is Boolean vectors, and their outputs are also Boolean). 
We use $0$ to represent false and $1$ to represent true. Hence $\sum f_i$ computes the number
of satisfying assignments (model counts) of $f_i$. 
The directions of the inequalities do not matter much because one can always negate each $f_i$.
For instance, let $f(\bx,\by)$ be a Boolean function (output is 0 or 1). $\sum_{\by\in \pY} f(\bx,\by) \leq 2^q$ can be converted by negating $f$ and modifying the threshold to $|\pY|-2^q$, resulting in an equivalent predicate $\sum_{\by\in \pY} (\neg f(\bx,\by)) \geq |\pY| - 2^q$.

Our \embed~algorithm obtains the constant approximation guarantee to the following slightly relaxed SMC 
problems. 
 The problem $\mathtt{SMC}(\phi, f_1, \dots, f_k,$ $q_1, \dots, q_k)$ finds a satisfying assignment $(\mathbf{x}, \mathbf{b})$ for:
\begin{align}
    \phi(\mathbf{x}, \mathbf{b}) &\wedge
     \left[b_1 \Rightarrow \left(\sum_{\mathbf{y}_1\in\mathcal{Y}_{1}} f_1(\mathbf{x}, \mathbf{y}_1) \geq 2^{q_1}\right) \right] \nonumber \\
     &\dots \wedge \left[b_k \Rightarrow \left(\sum_{\mathbf{y}_k\in\mathcal{Y}_{k}} f_k(\mathbf{x}, \mathbf{y}_k) \geq 2^{q_k}\right) \right].
    \label{eq:overall}
\end{align}
The only difference compared to the full-scale problem in Eq.~\eqref{eq:greatoverall})
is the replacement of $\Leftrightarrow$ with $\Rightarrow$. This change allows us to derive 
a concise constant approximation bound. 
We also mention that all the applied SMC problems considered in this paper can be formulated in this relaxed form. 
We thank the reviewers for pointing out 
{the work of \cite{fredrikson2014satisfiability}, 
who came up with a slightly different
SMC formulation with focused applications in privacy and an exact solver. 
Their formulation was a little more
general than ours, since theirs allows for
predicates like $\sum f \geq \sum g$, while ours only allows for $\sum f \geq$ constant. 
However, our formulation can handle $\sum f \geq \sum g$ by formulating it with $(\sum f \geq \alpha) \wedge (\sum g \leq \alpha)$ and binary searching on $\alpha$. 
}
\section{The \embed~Algorithm
}

The key motivation behind our proposed \embed~algorithm is to notice that Algorithm~\ref{alg:xor} itself can be written as a 
Boolean formula due to the Cook-Levin reduction.
When we embed this Boolean formula into Eq.~\eqref{eq:overall}, 
the Satisfiability-Modulo-Counting problem translates into a Satisfiability-Modulo-SAT problem, or equivalently, an SAT problem. 
This embedding also ensures a constant approximation guarantee (see Theorem \ref{th:xor-smc}).

To illustrate the high-level idea, let us consider replacing each $\sum_{\mathbf{y}_i\in\mathcal{Y}_{i}} f_i(\mathbf{x}, \mathbf{y}_i) \geq 2^{q_i}$ in Eq. (\ref{eq:overall}) with formula 
\begin{align}
    f_i(\mathbf{x}, \mathbf{y}_i) \wedge \mathtt{XOR}_1(\mathbf{y}_i) \wedge \ldots \wedge \mathtt{XOR}_{q_i}(\mathbf{y}_i).
    \label{eq:xor_qi}
\end{align}
We denote the previous equation (\ref{eq:xor_qi}) as $\gamma(f_i, \mathbf{x}, q_i, \mathbf{y}_i)$. This replacement results in the Boolean formula:
\begin{align}
    \phi(\mathbf{x}, \mathbf{b})\wedge 
    &\left[b_1 \Rightarrow \gamma(f_1, \mathbf{x}, q_1, \mathbf{y}_1) \right]\wedge \dots \wedge\nonumber \\
    & \left[b_k \Rightarrow \gamma(f_k, \mathbf{x}, q_k, \mathbf{y}_k) \right].
    \label{eq:embedded}
\end{align}

\begin{figure}[!t]
    \centering
    \includegraphics[width=\linewidth]{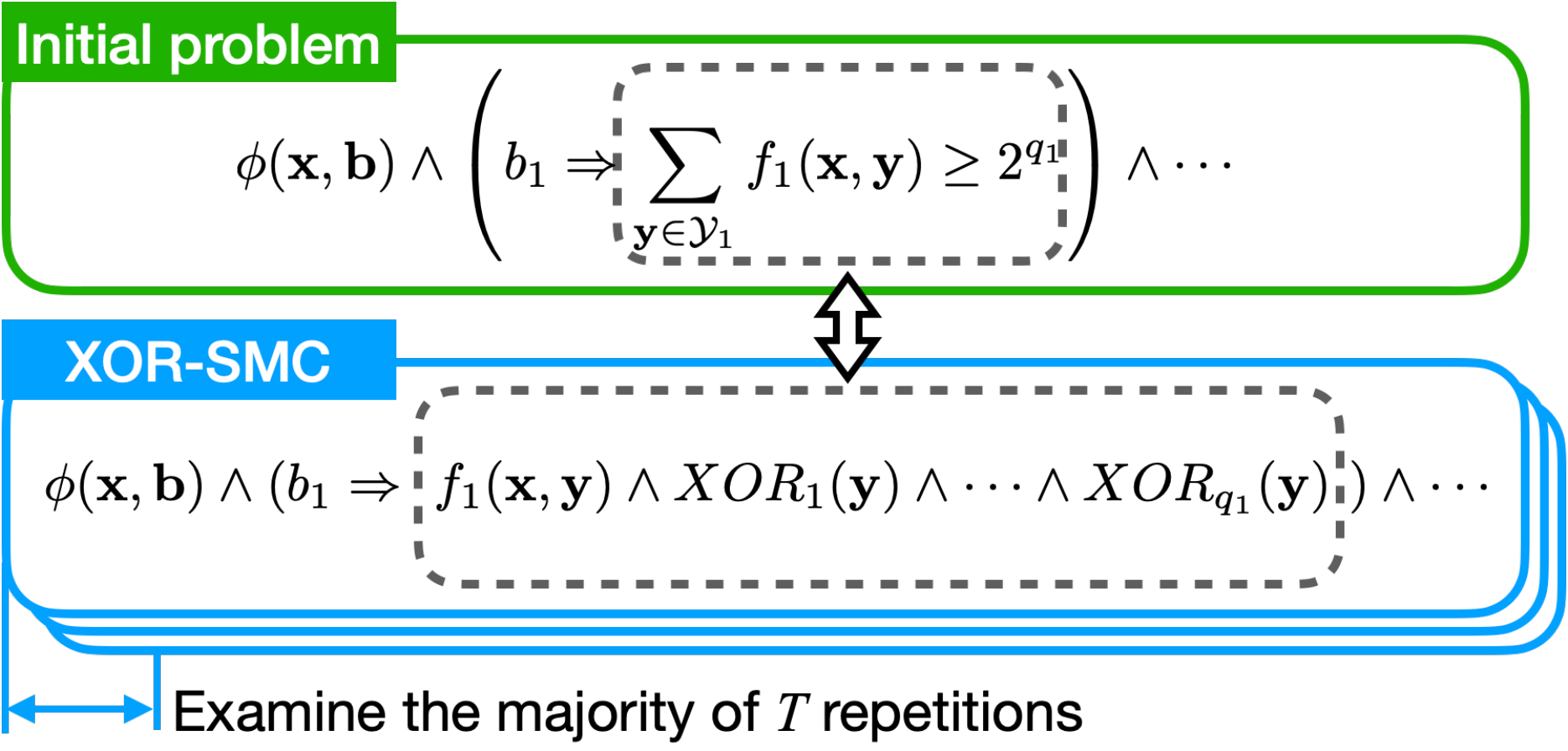}
    \caption{Our \embed~(shown in Algorithm \ref{alg:xor-smc}) solves the intractable model counting with satisfiability problems subject to randomized XOR constraints and obtains constant approximation guarantees for SMC.}
    \label{fig:xorsmc}
\end{figure}

We argue that the satisfiability of formula~\eqref{eq:embedded} should be closely 
related to that of formula~\eqref{eq:overall} due to the connection between model counting and satisfiability testing subject to randomized constraints (discussed in Section \ref{prelim:xor}).
To see this, Eq.~\eqref{eq:embedded} is satisfiable if and only if there exists $(\mathbf{x}, \mathbf{b}, \mathbf{y}_1, \ldots, \mathbf{y}_k)$ that render Eq.~\eqref{eq:embedded} true (notice $\mathbf{y}_1, \ldots, \mathbf{y}_k$ are also its variables). 
Suppose $\texttt{SMC}(\phi, f_1, \dots, f_k, q_1+c, \dots, q_k+c)$ is satisfiable (a.k.a., Eq.~\eqref{eq:overall} is satisfiable when $q_i$ is replaced with $q_i + c$).
Let $(\mathbf{x}, \mathbf{b})$ be a satisfying assignment. 
For any $b_i=1$ (true) in $\mathbf{b}$, we must have  $\sum_{\mathbf{y_{i}}\in\mathcal{Y}_{i}} f_{i}(\mathbf{x}, \mathbf{y}_{{i}}) \geq 2^{q_{i}+c}$. This 
implies with a good chance, there exists a $\by_i$ that renders $\gamma(f_i, \mathbf{x}, q_i, \mathbf{y}_i)$  true.  This is due to the discussed connection between model counting and SAT solving 
subject to randomized constraints. 
Hence $b_i \Rightarrow \gamma(f_i, \mathbf{x}, q_i, \mathbf{y}_i)$ is true. 
For any $b_i=0$ (false), the previous equation is true by default. 
Combining these two facts and $\phi(\mathbf{x}, \mathbf{b})$ is true, we see Eq.~\eqref{eq:embedded} is true.



Conversely, suppose $\texttt{SMC}(\phi, f_1, \dots, f_k, q_1-c, \dots, q_k-c)$ is not satisfiable.
This implies for every $(\mathbf{x}, \mathbf{b})$, either $\phi(\mathbf{x}, \mathbf{b})$ is false, or there exists at least one $j$ such that $b_j$ is true, but $\sum_{\mathbf{y}_{{j}}\in\mathcal{Y}_{{j}}} f_{j}(\mathbf{x}, \mathbf{y}_{{j}}) < 2^{q_{j}-c}$. 
The first case implies Eq.~\eqref{eq:embedded} is false under the assignment. 
For the second case, $\sum_{\mathbf{y}_{{j}}\in\mathcal{Y}_{{j}}} f_{j}(\mathbf{x}, \mathbf{y}_{{j}}) < 2^{q_{j}-c}$ implies with a good chance there is no $\mathbf{y}_j$ to make $\gamma(f_j, \mathbf{x},  q_{j}, \mathbf{y}_j)$ true. 
Combining these two facts, with a good chance Eq.~\eqref{eq:embedded} is not satisfiable.

In practice, to reduce the error probability the determination of the model count needs to rely on the majority satisfiability status of a series of equations \eqref{eq:xor_qi} (instead of a single one). 
Hence we develop Algorithm \ref{alg:xor-smc}, which is a little bit more complex than the high-level idea discussed above.
The idea is still to  \textbf{\textit{transform the highly intractable 
SMC problem into solving an SAT problem of its polynomial size}}, while \textbf{\textit{ensuring
 a constant approximation guarantee}}. 
Fig.~\ref{fig:xorsmc} displays the encoding of Algorithm~\ref{alg:xor-smc}.
We can see the core is still to replace the intractable model counting with satisfiability problems subject to randomized constraints. 
We prove \embed has a constant approximation guarantee in Theorem~\ref{th:xor-smc}.  We leave the implementation of \embed in the Appendix~B.


\begin{algorithm}[t]
\caption{\embed($\phi$, $\{f_i\}_{i=1}^k$, $\{q_i\}_{i=1}^k$, $\eta$, $c$)}
\label{alg:xor-smc}
$T \gets \ceil{\frac{(n+k)\ln2 - \ln \eta}{\alpha(c,k)}}$\;

\For{$t=1 \text{ to } T$}{
\For{$i=1\text{ to } k$}{
    $\psi_{i}^{(t)} \gets f_i(\bx, \by_i^{(t)})$\;
\For{$j=1,\ldots,q_i$}{
    $\psi_{i}^{(t)} \gets \psi_{i}^{(t)} \wedge \mathtt{XOR}_j(\by_i^{(t)})$\;
}
$\psi_{i}^{(t)} \gets \psi_{i}^{(t)} \vee \neg b_i$\;
}
$\psi_{t} \gets \psi_{1}^{(t)} \wedge \dots \wedge \psi_{k}^{(t)}$\;
}
$\phi^* \leftarrow \phi \wedge  \mathtt{Majority}(\psi_1, \ldots, \psi_T)$ \;
\eIf{there exists $(\bx, \bb, \{\by_i^{(1)}\}_{i=1}^k, \ldots, \{\by_i^{(T)}\}_{i=1}^k )$ that satisfies $\phi^*$}{
    \Return{True};
}{
    \Return{False};
}
\end{algorithm}

\begin{theorem}
\label{th:xor-smc}
Let $0 < \eta < 1$ and $c \geq \log(k+1) + 1$. Select $T = \ceil{ {((n+k)\ln2 - \ln \eta)}/{\alpha(c,k)}}$, we have 
\begin{itemize}[align=left, leftmargin=0pt, labelwidth=0pt, itemindent=!]
    \item Suppose there exists $\bx_0 \in \{0,1\}^n$ and $\bb_0 \in \{0,1\}^k$, such that $\mathtt{SMC}(\phi, f_1, \dots, f_k, q_1+c, \dots, q_k+c)$ is true. In other words, 
    \begin{equation*}
    \phi(\bx_0,\bb_0) \wedge \left( \bigwedge_{i=1}^k \left(b_i \Rightarrow \sum_{\by_i} f_i(\bx_0, \by_i) \geq 2^{q_i +c} \right) \right),
    \end{equation*}
    Then algorithm \embed($\phi$, $\{f_i\}_{i=1}^k$, $\{q_i\}_{i=1}^k$, $T$) returns true with probability greater than $1-\eta$.
    \item Contrarily, suppose $\mathtt{SMC}(\phi, f_1, \dots, f_k, q_1-c, \dots, q_k-c)$ is not satisfiable. In other words, for all $\bx \in \{0,1\}^n$ and $\bb \in \{0,1\}^k$,
    \begin{align*}
        \neg \left( \phi(\bx,\bb) \wedge \left( \bigwedge_{i=1}^k \left(b_i \Rightarrow \sum_{\by_i} f_i(\bx, \by_i) \geq 2^{q_i - c} \right) \right) \right),
    \end{align*}   
    then \embed($\phi$, $\{f_i\}_{i=1}^k$, $\{q_i\}_{i=1}^k$, $T$) returns false with probability greater than $1-\eta$. 
\end{itemize}
\end{theorem}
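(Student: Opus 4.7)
The plan is to handle completeness and soundness separately, both by amplifying the single-shot XOR guarantee of Lemma~\ref{th:wish} into a concentration statement via the majority over $T$ independent trials, and, for soundness, layering in a union bound over the $2^{n+k}$ possible $(\bx, \bb)$. A preliminary algebraic check, which I would carry out first, sets $q := 2^c/(2^c-1)^2$ and uses the hypothesis $c \geq \log_2(k+1) + 1$ (equivalently $2^c \geq 2(k+1)$) to derive both $q < 1/2$ and $kq < 1/2$; the sharper inequality reduces to $(2^c)^2 - 2(k+1)\cdot 2^c + 1 > 0$, which follows directly from $2^c \geq 2(k+1)$. These strict inequalities place the per-trial satisfiability probability on the correct side of $1/2$, and I would take $\alpha(c,k)$ to be the resulting Chernoff exponent, e.g.\ $\alpha(c,k) = 2\min\{(1/2 - kq)^2, (1/2 - q)^2\}$ via Hoeffding (or the tighter KL-divergence version).

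For completeness, fix a witness $(\bx_0, \bb_0)$ of the stronger SMC instance with thresholds $q_i + c$. For each trial $t$, the goal is to show $\Pr[\psi_t \text{ satisfiable at } (\bx_0, \bb_0)] \geq 1 - kq$: for indices $i$ with $b_{0,i} = 0$ the disjunct $\neg b_i$ trivially satisfies $\psi_i^{(t)}$, while for indices with $b_{0,i} = 1$ the hypothesis $\sum_{\by_i} f_i(\bx_0, \by_i) \geq 2^{q_i + c}$ combined with Lemma~\ref{th:wish} (first case, $q_0 = q_i + c$, $q_i$ random XORs) gives $\psi_i^{(t)}$ satisfiable with probability $\geq 1 - q$; a union bound across $i$ yields the claimed lower bound. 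Because the $T$ trials use independent random XORs and disjoint $\by_i^{(t)}$ variables, the indicators are independent Bernoulli, and a Chernoff bound gives $\Pr[\text{majority of }\psi_t\text{ fails at }(\bx_0, \bb_0)] \leq \exp(-T\alpha(c,k)) \leq \eta/2^{n+k} \leq \eta$ for the stated $T$.

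For soundness, fix any $(\bx, \bb) \in \{0,1\}^{n+k}$. If $\phi(\bx,\bb)$ is false, the conjunct $\phi$ in $\phi^*$ already rules out this assignment. Otherwise the hypothesis on the relaxed SMC forces some $j$ with $b_j = 1$ and $\sum_{\by_j} f_j(\bx, \by_j) < 2^{q_j - c}$; applying Lemma~\ref{th:wish} (second case, $q_0 = q_j - c$, $q_j$ random XORs) bounds $\Pr[\psi_j^{(t)} \text{ satisfiable}] \leq q$ and hence $\Pr[\psi_t \text{ satisfiable}] \leq q < 1/2$. Because the $\by_i^{(t)}$ are disjoint across $t$, satisfiability of $\phi^*$ at the fixed $(\bx, \bb)$ is equivalent to the count of satisfiable $\psi_t$'s exceeding $T/2$, a sum of independent $\text{Bernoulli}(\leq q)$ indicators. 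Chernoff gives per-$(\bx,\bb)$ failure at most $\exp(-T\alpha(c,k))$, and the union bound over all $2^{n+k}$ assignments combined with $T \geq ((n+k)\ln 2 - \ln\eta)/\alpha(c,k)$ yields total failure probability at most $\eta$.

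The main obstacle I anticipate is the soundness direction: it needs a uniform guarantee over exponentially many $(\bx, \bb)$, and the $2^{n+k}$ factor is exactly what forces $T = \Theta((n+k)/\alpha(c,k))$, so the per-$(\bx,\bb)$ error must be driven exponentially small in $n+k$. A secondary subtlety is justifying that satisfiability of $\phi^*$ over the combined variable set factors as an outer $\exists(\bx, \bb)$ around a trial-wise majority of satisfiability events, which hinges on each trial receiving fresh $\by_i^{(t)}$ variables and independent random XORs; without this disjointness the trial indicators would not be independent and the Chernoff step would collapse.
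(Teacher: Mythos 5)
Your proposal is correct and follows essentially the same route as the paper's proof: a per-trial union bound over the active $b_i$'s giving failure probability $kq$ (completeness) or a single bad index giving $q$ (soundness), Chernoff--Hoeffding amplification over the $T$ independent trials, and a union bound over all $2^{n+k}$ assignments $(\bx,\bb)$ in the soundness direction. The only cosmetic difference is that you propose a Hoeffding-style exponent and a single $\min$ over the two cases, whereas the paper defines $\alpha(c,k)$ as the KL divergence $D\bigl(\tfrac{1}{2}\,\big\|\,\tfrac{k2^c}{(2^c-1)^2}\bigr)$ and reconciles the two exponents at the end via $\alpha(c,k)\leq\alpha(c,1)$; your explicit remark that disjoint $\by_i^{(t)}$ blocks let satisfiability of $\phi^*$ factor into a trial-wise majority is a subtlety the paper leaves implicit.
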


\begin{proof}
{\bf Claim 1:} Suppose there exists $\bx_0 = [x_1,\dots,x_n] \in \{0,1\}^n$ and $\bb_0 = [b_1,\dots,b_k] \in \{0,1\}^k$, such that 
    \begin{align} \label{eq:condition1}
        \phi(\bx_0,\bb_0) \wedge \left( \bigwedge_{i=1}^k \left(b_i \Rightarrow \sum_{\by_i} f_i(\bx_0, \by_i) \geq 2^{q_i +c} \right) \right)
    \end{align}
    holds true.
    Denote $k_0$ as the number of non-zero bits in $\bb_0$. Without losing generality, suppose those non-zero bits are the first $k_0$ bits, i.e., $b_1 = b_2 = \dots = b_{k_0} = 1$ and $b_{i} = 0, \forall i > k_0$. Then Eq.~\eqref{eq:condition1} can be simplified to:
    \begin{align} \label{eq:part_condition1}
        \phi(\bx_0,\bb_0) \wedge \left( \bigwedge_{i=1}^{k_0} \left(\sum_{\by_i} f_i(\bx_0, \by_i) \geq 2^{q_i +c} \right) \right)
    \end{align}

     Consider the Boolean formula $\psi_t$ defined in the \embed algorithm (choosing any $t \in \{1, \dots, T\}$). $\psi_t$ can be simplified by substituting the values of $\bx_0$ and $\bb_0$. After simplification, we obtain:
     \begin{equation*}
     \begin{aligned}
        \psi_t &=  \left( f_1(\bx_0, \by_1^{(t)}) \wedge \mathtt{XOR}_1(\by_1^{(t)})  \dots \wedge \mathtt{XOR}_{q_1}(\by_1^{(t)}) \right) \wedge \dots  \\
        & \wedge \left( f_{k_0}(\bx_0, \by_{k_0}^{(t)}) \wedge \mathtt{XOR}_1(\by_{k_0}^{(t)})  \dots \wedge \mathtt{XOR}_{q_{k_0}}(\by_{k_0}^{(t)}) \right).
    \end{aligned}    
     \end{equation*}
    Let $\gamma_i = \left( f_i(\bx_0, \by_i^{(t)}) \wedge \mathtt{XOR}_1(\by_i^{(t)})  \dots \wedge \mathtt{XOR}_{q_i}(\by_i^{(t)}) \right) $. Observing that $\sum_{\by_i} f_i(\bx_0, \by_i) \geq 2^{q_i +c}, \forall i = 1,\dots,k_0$. According to Lemma~\ref{th:wish}, with probability at least $1 - \frac{2^c}{(2^c - 1)^2}$, there exists $\by_i^{(t)}$, such that $(\mathbf{x}_0, \by_i^{(t)})$ renders $\gamma_i$ true. 
    The probability that $\psi_t$ is true under $(\mathbf{x}_0, \mathbf{b}_0, \by_1^{(t)}, \dots, \by_k^{(t)})$ is: 
    \begin{align*}
     \prob ( (\mathbf{x}_0, &\mathbf{b}_0, \by_1^{(t)}, \dots, \by_k^{(t)}) \text{ renders } \psi_t \text{ true})\\
     =~& \prob \left(\bigwedge_{i = 1}^{k_0} ((\mathbf{x}_0, \by_i^{(t)}) \text{ renders }\gamma_i  \text{ false}) \right) \\
    =~& 1 - \prob \left(\bigvee_{i = 1}^{k_0} ((\mathbf{x}_0, \by_i^{(t)}) \text{ renders }\gamma_i  \text{ false}) \right) \\
        \geq~& 1 - \sum_{i=1}^{k_0} \prob\left((\mathbf{x}_0, \by_i^{(t)}) \text{ renders }\gamma_i  \text{ false}\right) \\
        \geq~& 1 - \frac{k_0 2^c}{(2^c - 1)^2} \geq 1 - \frac{k 2^c}{(2^c - 1)^2}.
    \end{align*}

Define $\Gamma_t$ as a binary indicator variable where
\begin{align*}
    \Gamma_t = \begin{cases}
        1 & \text{if $(\mathbf{x}_0, \mathbf{b}_0, \by_1^{(t)}, \dots, \by_k^{(t)})$ renders $\psi_t$ true}, \\
        0 & \text{otherwise}.
    \end{cases}
\end{align*}
Therefore $\prob(\Gamma_t = 0) \leq \frac{k 2^c}{(2^c - 1)^2}$. $\prob(\Gamma_t = 0) < \frac{1}{2}$ when $c \geq \log_2(k+1) +1$. 
\embed returns true if the majority of $\psi_t, t=1,\dots,T$ are true; that is, $\sum_t \Gamma_t \geq \frac{T}{2}$. Let's define 
\begin{align*}
    \alpha(c,k) &= D\left(\frac{1}{2}\|\frac{k2^c}{(2^c - 1)^2}\right) \\
    &= \frac{1}{2} \ln \frac{(2^c - 1)^2}{k 2^{c+1}} + \left( 1-\frac{1}{2}\right) \ln \frac{2(2^c - 1)^2}{(2^c - 1)^2 - k 2^{c+1}}.
\end{align*}

When $c \geq \log_2(k+1) +1$, observing that $\alpha(c, k) > 0$ , we can apply the Chernoff-Hoeffding theorem to obtain:
\begin{align*}
    \prob\left( \sum_{t = 1}^T \Gamma_t \geq \frac{T}{2}  \right)
    &=1 - \prob\left( \sum_{t = 1}^T \Gamma_t < \frac{T}{2}  \right) \geq 1 - e^{-\alpha(c,k) T}
\end{align*}

For $T \geq \ceil{ \frac{((n+k)\ln2 - \ln \eta)}{\alpha(c,k)}} \geq\frac{- \ln \eta}{ \alpha(c,k)}$, it follows that $e^{-\alpha(c,k) T} \leq \eta$. Therefore, with a probability at least $1-\eta$, we have $\sum_t \Gamma_t \geq \frac{T}{2}$. In this scenario, \embed($\phi$, $\{f_i\}_{i=1}^k$, $\{q_i\}_{i=1}^k$, $T$) returns true as it discovers $\bx_0$, $\bb_0$, $(\by_1^{(t)}, \dots, \by_k^{(t)})$, for which the majority of Boolean formulae in $\{\psi_t\}_{t = 1}^T$ are true.

{\bf Claim 2:} Suppose for all $\bx \in \{0,1\}^n$ and $\bb \in \{0,1\}^k$, 
\begin{align*}
    \neg \left( \phi(\bx,\bb) \wedge \left( \bigwedge_{i=1}^k \left(b_i \Rightarrow \sum_{\by_i} f_i(\bx, \by_i) \geq 2^{q_i - c} \right) \right) \right) 
\end{align*}
Consider a fixed $\bx_1$ and $\bb_1$,
the previous condition with high probability renders most $\psi_t$ false
in Algorithm \ref{alg:xor-smc}. 
We prove that the probability is sufficiently low such that \embed will return false with a high probability after examining all $\bx$ and $\bb$. The detailed proof is left in Appendix~A.
\end{proof}

\section{Experiment 1: Locate Emergency Shelters}

\noindent\textbf{Problem Formulation.}
Disasters such as hurricanes and floods continue to endanger millions of lives. Shelters are safe zones that protect residents from possible damage, and evacuation routes are the paths from resident zones toward shelter areas. To enable the timely evacuation of resident zones, picking a set of \textit{shelter locations} with sufficient routing from resident areas should be considered.
Given the unpredictability of chaos during natural disasters, it is crucial to guarantee multiple paths rather than one path from residential areas to shelters. This ensures that even if one route is obstructed, residents have alternative paths to safety areas.

\begin{figure}[!t]
    \centering
\includegraphics[width=0.92\linewidth]{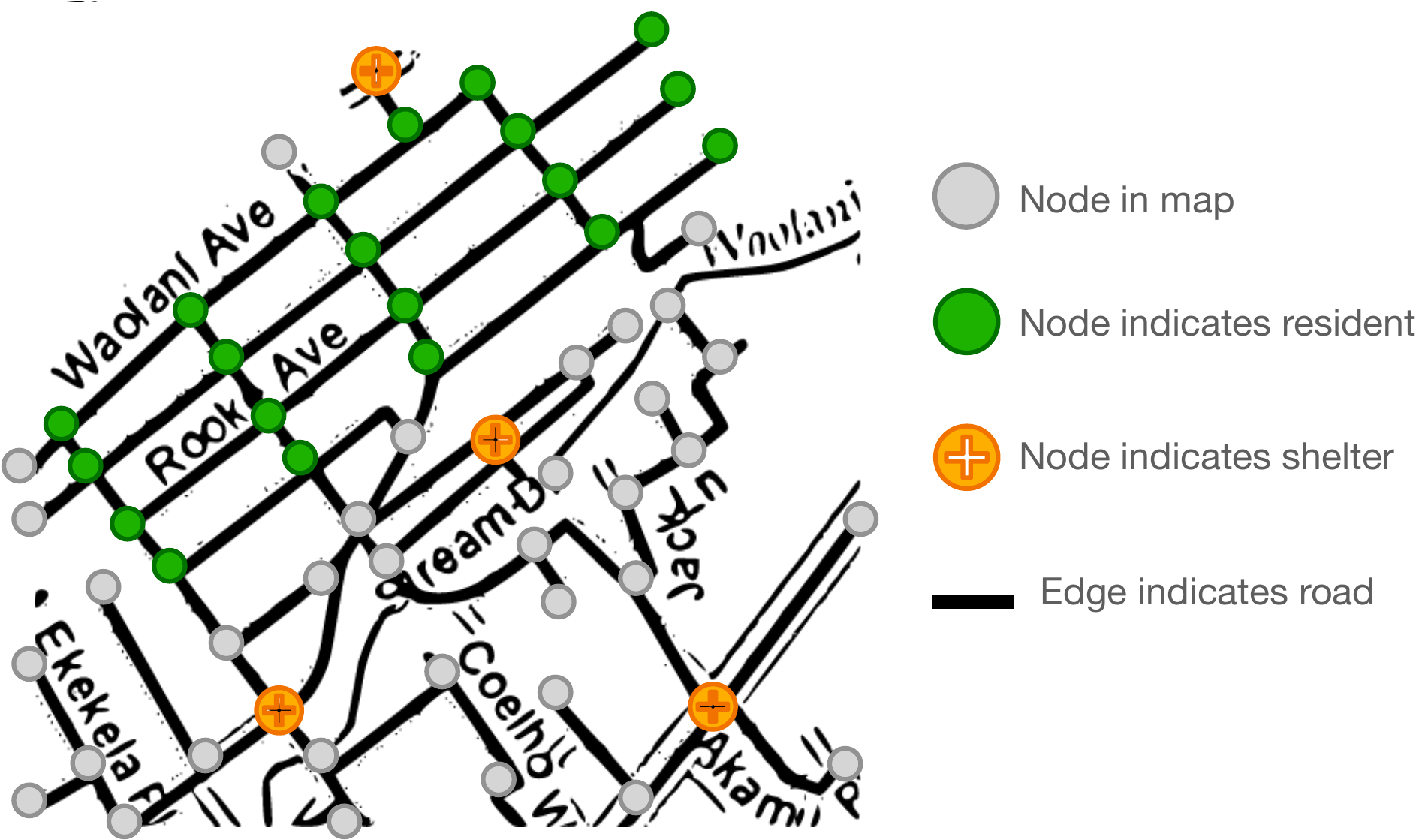}
    \caption{Example assignment of shelters that guarantee sufficient alternative paths from the resident areas, at Hawaii Island. Every orange dot corresponds to shelters and the green dot indicates a resident area.}
    \label{fig:shelter}
\end{figure}

Given a map $G = (V, E)$ where nodes in $V = \{v_1,\dots,v_N\}$ represent $N$ areas and an edge $e = (v_i,v_j) \in E$ indicates a road from $v_i$ to $v_j$, $N$ and $M$ denote the number of nodes and edges, respectively. Given a subset of nodes  $R =\{v_{r_1},\dots, v_{r_k} \} \subseteq V$ indicates the \textit{residential areas},  the task is to choose at most $m$ nodes as shelters from the rest of the nodes, such that the number of routes that can reach a shelter from each residential area is maximized. 
Fig.~\ref{fig:shelter} gives an example with  $m=4$ shelters and there are sufficiently many roads connecting the resident area to those shelters.

Current methods~\cite{bayram2018shelter,amideo2019optimising} considered finding shelter locations that have at least one single path from a residential area. However, those proposed methods cannot be generalized to solve the problem that requires sufficient alternative routes from residential area to shelters, primarily because counting the number of paths is intractable. 
This complexity makes it difficult to solve large-scale problems of this type.

\noindent\textbf{SMC Formulation.}  
\embed transforms this optimization problem into a decision problem by gradually increasing the path count threshold $q_r$.
The decision problem decides if there are at least $2^{q_r}$ paths connecting any residential area with a shelter. 
The assigned shelters is represented by a vector $\mathbb{b} = (b_1,\ldots, b_n)\in\{0,1\}^n$, where $b_i=1$ implies node $v_i$ is chosen as shelter. Let $\phi(\mathbf{b})=\left(\sum_{i=1}^n b_i\right) \le m$ represent there are at most $m$ shelters. Let $f(v_r, v_s, E')$ be an indicator function that returns one if and only if the selected edges $E'$ form a path from $v_r$ to $v_s$. The whole formula is:
\begin{align*}
    \phi(\mathbf{b}),b_i\Rightarrow& \left(\sum_{v_s \in S, E'\subseteq E}f(v_r, v_s, E') \geq 2^{q_r}\right)
\text{for } 1\le i\le n.
\end{align*}
We leave the details implementation of $f(v_r, v_s, E')$ in the Appendix C.

\subsection{Empirical Experiment Analysis}
\noindent\textbf{Experiment Setting.} We crawl the real-world dataset from the Hawaii Statewide GIS Program website. We extract the real Hawaii map with those major roads and manually label those resident areas on the map. We create problems of different scales by subtracting different sub-regions from the map. 
3 major resident areas are picked as $R$, and set $m=5$.

\begin{table}[!t]
\centering
\caption{\embed takes less empirical running time than baselines to find shelter location assignments over different graphs. Graph size is the number of nodes in the graph.} 
\label{tab:shelter}
\begin{tabular}{c|ccc}
\hline
&\multicolumn{3}{c}{Graph Size } \\
& $N=121$ & $N=183$  & $N=388$\\
\hline
\embed (ours)  &  $\mathbf{0.04} hr$  & $\mathbf{0.11} hr$      & $\mathbf{0.16} hr$  \\
GibbsSampler-LS   & $0.56 hr$   &  $0.66 hr$ & $6.97 hr$ \\
QuickSampler-LS   & $0.31 hr$ & $0.29 hr$   & $0.62 hr$ \\
Unigen-LS & $0.08 hr$ & $0.17 hr$  & $0.42 hr$ \\
\hline
\end{tabular}
\end{table}

In terms of baselines, we consider the local search algorithm with shelter locations as the state and the number of paths between shelters and resident areas as the heuristic. Due to the intractability of path counting in our formulation, the heuristic is approximated by querying sampling oracles. In particular, we consider 
1) Gibbs sampling-based \cite{geman1984stochastic} Local Search (Gibbs-LS). 2) Uniform SAT sampler-based  \cite{SGM20} Local Search (Unigen-LS). 3) Quick Sampler-based \cite{dutra2018efficient} Local Search (Quick-LS). Each baseline runs 5 times, and the best result is included. Each run is until the approach finds a local minimum. For our \embed, we give a time limit of 12 hours. The algorithm repeatedly runs with increasing $q_r$ until it times out. The time shown in Table~\ref{tab:shelter} and the number of paths in Figure~\ref{fig:shelter} correspond to the cumulative time and the best solutions found before the algorithm times out.
For the evaluation metrics, we consider 1) the number of paths identified in the predicted plan by each algorithm, and 2) the total running time of the process.

\noindent\textbf{Result Analysis.} In terms of running time (in Table~\ref{tab:shelter}), \embed takes less empirical running time than baselines for finding shelter location assignments over different graphs. In Fig.~\ref{fig:shelter_line}, we evaluate the quality of the predicted shelters by counting the number of connecting paths from residents to the shelters.
The path is counted by directly solving the counting predicate by SharpSAT-TD~\cite{korhonen2021integrating} with given shelter locations. The shelter locations selected by our \embed lead to a higher number of paths than those found by the baselines. 

\begin{figure}[!t]
    \centering
    \includegraphics[width=.99\linewidth]{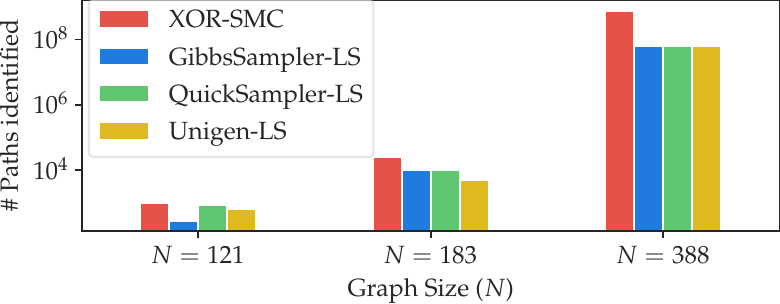}
    \caption{\embed finds better shelter locations with orders of magnitudes more paths from the resident area to the chosen shelters than competing baselines across different graphs (y-axis in log scale).}
    \label{fig:shelter_line}
\end{figure}

\begin{table*}[!t]
    \centering
    \caption{Empirical average of the total production (tons) and running time on two supply networks. Our \embed finds better solutions (higher production) with less time usage compared to baselines.}
    \label{tab:supply}
    \begin{tabular}{r|ccc|ccc}
    \hline
    & \multicolumn{3}{c|}{Small Supply Network} &\multicolumn{3}{c}{Real-world Supply Network}\\
    &\multicolumn{3}{c|}{Disaster Scale ($\%$ of affected edges)} &\multicolumn{3}{c}{Disaster Scale ($\%$ of affected edges)} \\
    & $10\%$ & $20\%$ & $30\%$  & $10\%$ & $20\%$ & $30\%$  \\
    \hline
    \embed (ours)  & $\mathbf{95.0(0.67s)}$  & $\mathbf{96.0(1.19s)} $ & $\mathbf{119.3(17.63s)}$ & $\mathbf{232.2 (57.8s)}$  & $\mathbf{210.2 (75.5s)} $ & $\mathbf{147.8 (96.9s)}$\\
    BP-SAA & $86.5(1.17s)$ & $82.7(3.76s)$ & $118.8(11.48s)$& $187.2 (2 hr)$ & $143.7 (2 hr)$ & $130.5 (2 hr)$\\
    Gibbs-SAA & $88.0(1.24s)$ & $95.4 (4.48s)$ & $113.3 (15.61s)$ &$155.0 (2 hr)$ & $147.1 (2 hr)$ & $111.0 (2 hr)$ \\
    IS-SAA & $80.5 (0.98s)$ & $76.7(3.29s)$ & $109.6(21.74s)$& $157.4 (2 hr)$ & $123.5 (2 hr)$ & $132.9 (2 hr)$\\
    LoopyIS-SAA & $88.0 (0.84s)$ & $95.4 (2.96s)$ & $112.3 (17.95s)$& $200.2 (2 hr)$ & $149.5 (2 hr)$ & $130.5 (2 hr)$\\
    Weighted-SAA & $86.5 (1.11s)$ & $95.4 (3.57s)$ & $117.0(11.80s)$ & $170.5 (2 hr)$ & $169.0 (2 hr)$ & $130.3 (2 hr)$\\
    \hline
    Best possible values & $100.0$ & $104.0$ & $128.0$ & $237.0$ & $226.0$ & $235.0$\\
    \hline
    \end{tabular}
\end{table*}
\section{Experiment 2: Robust Supply Chain Design}

\noindent\textbf{Problem Formulation.} 
Supply chain management found its importance in operations research and economics. The essence of supply chain management is to integrate the flow of products and finances to maximize value to the consumer. Its importance is underscored by the increasing complexity of business environments, where minor inefficiencies by random disasters result in significant extra costs.

Given a supply chain of $N$ suppliers, they form a supply-demand network $(V,E)$ where each node $v\in V$ represents a supplier and edges $e \in E$ represent supply-to-demand trades.
Each supplier $v$ acts as a vendor to downstream suppliers and also as a buyer from upstream suppliers.  We assume a conservation of production conditions for each node, i.e., the input-output ratio is 1.
%
%
To guarantee substantial production, supplier $v$ should order necessary raw materials from upstream suppliers in advance. Denote the cost of trade between vendor $u$, and buyer $v$ as $c(u,v)$.
Let the amount of goods of the trade between $u$ and $v$ be $f(u,v)$ and $B(v)$ be the total budget to get all his materials ready. 
Due to unpredictable natural disasters and equipment failure, the trade between $u$ and $v$ may fail. Denote $\btheta = (\theta_1, \ldots, \theta_L) \in \{0,1\}^L$ as the state of $L$ different stochastic events, where $\theta_l = 1$ indicates event $l$ occurs.
The objective is to design a global trading plan maximizing the expected total production output, accounting for stochastic influences. This is measured by the output of final-tier suppliers, who produce end goods without supplying others.

Existing works in supply chain optimization often gravitate towards mathematical programming approaches. 
However, when delving into more complex scenarios involving stochastic events—such as uncertain demand or supply disruptions—the task becomes considerably more intricate. Specifically, formulating counting constraints, like guaranteeing a certain amount of supplies across multiple vendors under stochastic events, is intractable. 
These complexities necessitate innovative approaches that can capture the randomness and dynamism inherent in real-world supply chain systems without sacrificing optimality.


\noindent\textbf{SMC Formulation.}
Similarly, we transform the maximization into a decision problem by gradually increasing the threshold of the production. Denote the trading plan as $\bx = (x_{u,v}, \dots) \in \{0,1\}^{|E|}$, where $x_{u,v} = 1$ indicates that $v$ purchases raw material from $u$. The decision problem can be formulated into an SMC problem as follows: 
\begin{align} \label{eq:spc_initial}
    &\phi(\bx), \sum_{v \in D} \expect_{\theta} \left[ \sum_{(u,v) \in E_{\bx,\btheta}} f(u,v) \right] \geq 2^q
\end{align}
where $D$ represents final-tier supplier nodes, $2^q$ is a minimum production level to be guaranteed, $E_{\bx,\btheta}$ is the remaining sets of edges after applying trading plan $\bx$ and under the stochastic events of disasters $\btheta$. $\phi(\bx)$ encodes all essential constraints, e.g., adherence to budget limits, output not exceeding input at each node, etc. 
 We leave the formulation details in Appendix C.

\subsection{Empirical Experiment Analysis}
\noindent\textbf{Experiment Setting.} 
The dataset is the wheat supply chain network from \cite{zokaee2017robust}. 
The dataset only provides the cost of trade, the capacity of transportation, and raw material demand. We further generate stochastic events of disasters (see Appendix~C) over different portions of supply-demand edges. The random disasters make the expectation computation intractable. For a better comparison of running time, a small-scale synthetic network is included, in which the cost, budget, and capacities are randomly generated. For the small supply network,  the number of nodes in each layer is $[4,4,5,5]$. For the real-world supply network, the number of nodes in each layer is $[9,7,9,19]$. 

For the baseline, we utilize Sample Average Approximation (SAA)-based methods \cite{kleywegt2002sample}. These baselines employ Mixed Integer Programming (MIP) to identify a trading plan that directly maximizes the average production across networks impacted by 100 sampled disasters. The average over samples serves as a proxy for the actual expected production. For the sampler, we consider Gibbs sampling (Gibbs-SAA), belief propagation (BP-SAA), importance sampling (IS-SAA), loopy-importance sampling, and weighted sampling (Weighted-SAA).
For a fair comparison, we imposed a time limit of 30 seconds for the small-sized network and 2 hours for the real-world network. 
The time shown in Table \ref{tab:supply} for SAA approaches is their actual execution time. Our SMC solver again executes repeatedly with increasing $q$ until it times outs. 
The time shown is the cumulative time it finds the best solution (last one) before the time limit.

To evaluate the efficacy of a trading plan, we calculate its empirical average production under $10,000$ i.i.d. disasters, sampled from the ground-truth distribution. The production numbers are reported in Table \ref{tab:supply}. This method is adopted due to the computational infeasibility to calculate expectations directly.
For SAA approaches that exceed the time limit, the production numbers in Table \ref{tab:supply} are for the best solutions found within the time limit.

\noindent\textbf{Result Analysis.} Table~\ref{tab:supply} shows the production and running times of plans derived from various methods. 
For small networks, while SAA-based methods can complete MIP within the time limit, they remain sub-optimal as they optimize a surrogate expectation derived from sampling, which deviates from the true expectation. In the case of larger networks, these methods further struggle due to the poor scalability of the large MIP formulation. They fail to find good solutions within the 2-hour time limit. 
In contrast, \embed, by formulating as an SMC problem, directly optimizes the intractable expectation using XOR counting, and yields superior solution quality and less running time.

\section{Conclusion}
We presented \embed, an algorithm with polynomial approximation guarantees to solve the highly intractable Satisfiability Modulo Counting (SMC) problems.
%
Solving SMC problems presents unique challenges due to their intricate nature, integrating statistical inference and symbolic reasoning. 
Prior work on SMC solving offers no or loose guarantees and may find suboptimal solutions. 
%
\embed~transforms the intractable SMC problem into  satisfiability problems by replacing intricate model counting with SAT formulae subject to  randomized XOR constraints. \embed~also obtains constant approximation guarantees on the solutions obtained. 
SMC solvers offer useful tools for many real-world problems at the nexus of symbolic and statistical AI.
Extensive experiments on two real-world applications in AI for social good demonstrate that \embed~outperforms other approaches in solution quality and running time.

\section*{Acknowledgments}
We thank all the reviewers for their constructive comments.
This research was supported by NSF grant CCF-1918327.

\bibliography{simplified}

\appendix
\clearpage
\onecolumn
\section{Proofs of Theorem~\ref{th:xor-smc}}
\label{apx:proof_claim2}

\subsection{Proof of Theorem~\ref{th:xor-smc} Claim 2}
Suppose for all $\bx \in \{0,1\}^n$ and $\bb \in \{0,1\}^k$, there is 
\begin{align*}
    \neg \left( \phi(\bx,\bb) \wedge \left( \bigwedge_{i=1}^k \left(b_i \Rightarrow \sum_{y_i} f_i(\bx, y_i) \geq 2^{q_i - c} \right) \right) \right) \\
    \Rightarrow 
    \neg \phi(\bx,\bb) \vee \left( \bigvee_{i=1}^k \left( b_i \wedge \sum_{y_i} f_i(\bx, y_i) \leq 2^{q_i -c} \right) \right).
\end{align*}
Consider a fixed $\bx_1$ and $\bb_1$, we will examine the probability of not returning true when discovering $(\bx_1, \bb_1)$.

If $\neg \phi(\bx_1,\bb_1)$ holds true, then the probability of \embed($\phi$, $\{f_i\}_{i=1}^k$, $\{q_i\}_{i=1}^k$, $T$) not returning true for discovering $(\bx_1, \bb_1)$ is one. 

Otherwise, when $\bigvee_{i=1}^k \left( b_i \wedge \sum_{y_i} f_i(\bx_1, y_i) \leq 2^{q_i -c} \right)$ is true, i.e, 
\begin{align} \label{eq:condition2}
    \exists i, ~s.t.~~b_i \wedge \sum_{y_i} f_i(\bx_1, y_i) \leq 2^{q_i -c}.
\end{align}
Let’s select any $t = 1, \dots, T$. The Boolean formula $\psi_t$ can be simplified by substituting the values of $\bx_1$ and $\bb_1$:
\begin{align*}
    \psi_t = &\left( f_1(\bx_1, y_1^{(t)}) \wedge \mathtt{XOR}_1(y_1^{(t)}) \wedge \dots \wedge \mathtt{XOR}_{q_1}(y_1^{(t)}) \right) \nonumber \\
    &\wedge \dots \wedge \nonumber \\
    & \left( f_{k_1}(\bx_1, y_{k_1}^{(t)}) \wedge \mathtt{XOR}_1(y_{k_1}^{(t)}) \wedge \dots \wedge \mathtt{XOR}_{q_{k_1}}(y_{k_1}^{(t)}) \right) \nonumber
\end{align*}
where $k_1$ is the number of digits that are non-zero in $\bb_1$, and $1 \leq k_1 \leq k$.

According to Eq.~\eqref{eq:condition2}, we define 
$$
\gamma_i = \left( f_{i}(\bx_1, y_{i}^{(t)}) \wedge \mathtt{XOR}_1(y_{i}^{(t)}) \wedge \dots \wedge \mathtt{XOR}_{q_{i}}(y_{i}^{(t)}) \right)
$$
where $\sum_{y_{i}^{(t)}} f_i(\bx_1, y_{i}^{(t)}) \leq 2^{q_i -c}$. Then according to Lemma~\ref{th:wish}, with probability at least $1 - \frac{2^c}{(2^c - 1)^2}$, $(\bx_1, y_{i}^{(t)})$ renders $\gamma_i$ false for all $y_{i}^{(t)}$.

Then the probability of $\psi_t$ being false under all $(\bx_1, \bb_1, y_{1}^{(t)},\dots, y_{k}^{(t)})$ is
\begin{align*}
    & \prob((\bx_1, \bb_1, y_{1}^{(t)},\dots, y_{k}^{(t)}) \text{ renders $\psi_t$ false for all $\{y_{i}^{(t)} \}_{i=1}^k$}) \\
    = & \prob\left( \bigvee_{i=1}^{k_1} (\bx_1, y_i^{(t)}) \text{ renders $\gamma_i$ false for all $y_{i}^{(t)}$} \right) \\
    \geq & \prob((\bx_1, y_i^{(t)}) \text{ renders $\gamma_i$ false for all $y_{i}^{(t)}$}) \\
    \geq & 1 - \frac{2^c}{(2^c - 1)^2}
\end{align*}

Define $\Gamma_t$ as a binary indicator variable where
\begin{align*}
    \Gamma_t = \begin{cases}
        0 & \text{if all $(\bx_1, \bb_1, y_{1}^{(t)},\dots, y_{k}^{(t)})$ renders $\psi_t$ false} \\
        1 & \text{otherwise}
    \end{cases}
\end{align*}

Upon discovering $\bx_1,\bb_1$, \embed won't return true if the majority of $\psi_t$, $t = 1,\dots,T$ are false under $(\bx_1, \bb_1, y_{1}^{(t)},\dots, y_{k}^{(t)})$ for all $\{y_{i}^{(t)} \}_{i=1}^k$, that is, $\sum_{t=1}^T \Gamma_{t} \leq \frac{T}{2}$. By Chernoff–Hoeffding theorem, 
\begin{align*}
    \prob\left( \sum_{t=1}^T \Gamma_t \leq \frac{T}{2} \right) &= 1 - \prob\left( \sum_{t=1}^T \Gamma_t > \frac{T}{2} \right) \geq 1 - e^{-\alpha(c,1) T} 
\end{align*}

For $T \geq \frac{(n+k)\ln2 - \ln \eta}{\alpha(c,1)}$, it follows that $e^{-\alpha(c,1) T} \leq \frac{\eta}{2^{n+k}}$, In this case, with a probability at least $1-\frac{\eta}{2^{n+k}}$, \embed won't return true upon discovering $\bx_1$ and $\bb_1$.

Noted that \embed returns false only if it won't return true upon discovering any $(\bx_1, \bb_1)$. Therefore, we can write
\begin{align*}
    & \prob(\text{\embed returns false}) \\ 
    =~&\prob(\forall \bx, \bb, \text{\embed won't return true upon $(\bx, \bb)$})\\
    =~&1 - \prob(\exists \bx_1, \bb_1, \text{\embed returns true upon $(\bx_1, \bb_1)$}) \\
    \geq ~& 1 - \sum_{\bx_1,\bb_1} \prob( \text{\embed returns true upon $(\bx_1, \bb_1)$}) \\
    \geq ~& 1 - \frac{\eta}{2^{n+k}} 2^{n+k} = 1 - \eta.
\end{align*}

\subsection{Proof of Theorem 2}

Combining both Claim 1 and 2, we can choose the value $T = \frac{(n+k)\ln2 - \ln \eta}{\alpha(c,k)}$ such that $T \geq \max\{\frac{- \ln \eta}{\alpha(c,k)}, \frac{(n+k)\ln2 - \ln \eta}{\alpha(c,1)} \}$. This completes the proof. 

\section{Implementation of \embed} \label{apx:implement}

Our code is attached to the supplementary material.
It contains  1) the implementation of our \embed method 2) the list of datasets, and 3) the implementation of several baseline algorithms.

We use CPLEX solver\footnote{\url{https://www.ibm.com/products/ilog-cplex-optimization-studio/cplex-optimizer}} version 22.1.1 as the NP oracle. We use Python package boolexpr\footnote{\url{http://www.boolexpr.org/index.html}} to process those parts involve logic operations ``$\lor, \land,\neg$''.




\subsection{Computational Pipeline of \embed}

\subsubsection{Preprocess}
The preprocessing step converts the problem into a standard form as specified in Eq.~\eqref{eq:overall}: 
\begin{align*}
    \phi(\mathbf{x}, \mathbf{b}) &\wedge
     \left[b_1 \Rightarrow \left(\sum_{\mathbf{y}_1\in\mathcal{Y}_{1}} f_1(\mathbf{x}, \mathbf{y}_1) \geq 2^{q_1}\right) \right] \wedge \dots \wedge \left[b_k \Rightarrow \left(\sum_{\mathbf{y}_k\in\mathcal{Y}_{k}} f_k(\mathbf{x}, \mathbf{y}_k) \geq 2^{q_k}\right) \right].
\end{align*}
The crucial aspects are to guarantee
\begin{itemize}
    \item $\forall i$, $\mathcal{Y}_i$ should be properly encoded such that $\mathcal{Y}_i \subseteq \{0,1\}^{d_i}$ for some $d_i$. (See example in Shelter Location Assignment.)
    \item $\forall i$, $f_i:\pX \times \pY_i\rightarrow \{0,1\}$ must be a binary-output function. (See example in Robust Supply Chain Design.)
\end{itemize}

\subsubsection{Formulate a binary-output function into a Boolean Formula}

A binary-output function $f_i$ can be formulated into a Boolean formula where $\bx$ and $\by_i$ are seen as boolean variables and $f_i(\bx,\by_i)=1$ iff. the boolean formula is satisfiable at $(\bx, \by_i)$. 

Although it seems straightforward to use SAT solvers, we found it more efficient to transform the SAT into an equivalent Constraint Programming (CP) problem and query the CPLEX solver after comparison. Due to the interchangeability of CP and SAT, we will keep using the SAT formulation for the following discussion.

\subsubsection{Add XOR Constraints}
Then we need to add random XOR constraints to the Boolean formula. As an example, an XOR constraint for $\by \in \{0,1\}^d$ is 
\begin{align*}
    \mathtt{XOR}(\by) = y_2 \oplus y_3 \oplus y_d \oplus 1
\end{align*}
In this $\mathtt{XOR}(\by)$, each literal $y_j$, $j=1,\dots,d$, and the extra $1$ in the end (adding an extra $1$ can be seen as negating all literals) has $1/2$ chance of appearance. Following this manner, we will sample multiple XOR constraints for each $\by_i$. The number of XOR constraints for $\by_i$ is determined by the logarithm of the corresponding threshold $\log_2(2^{q_i}) = q_i$.

After sampling all XOR constraints, we will have a boolean formula:
\begin{align}
\label{eq:bool_w_xor}
    & (b_1 \Rightarrow f_1(\mathbf{x}, \mathbf{y}_1)\wedge \mathtt{XOR}_1(\mathbf{y}_1) \wedge \ldots \wedge \mathtt{XOR}_{q_1}(\mathbf{y}_1) ) \wedge \nonumber\\
    & (b_2 \Rightarrow f_2(\mathbf{x}, \mathbf{y}_2)\wedge \mathtt{XOR}_2(\mathbf{y}_2) \wedge \ldots \wedge \mathtt{XOR}_{q_2}(\mathbf{y}_2) ) \wedge \\
    & \cdots \nonumber
\end{align}

\subsubsection{Repeat and Examine the Majority}
Since all XOR constraints are sampled uniformly, the chance of Eq.~\eqref{eq:bool_w_xor} being satisfiable when Eq.~\eqref{eq:overall} holds true may not be significant. Therefore we can enhance the probability by repeating experiments $T$ times and checking whether the majority is satisfiable.

\section{Experiment Settings}

All experiments run on a high-end server with two Sky Lake CPUs @ 2.60GHz, 24 Core, 96 GB RAM.

\subsection{Shelter Location Assignment} \label{apx:shelter}
 
As discussed in the main text, the essential decision problem decides there are at least $2^{q_r}$ paths connecting any residential area with a shelter. 
The assigned shelters is represented by a vector $\bb = (b_1,\ldots, b_n)\in\{0,1\}^n$, where $b_i=1$ implies node $v_i$ is chosen as shelter. Let $\phi(\mathbf{b})=\left(\sum_{i=1}^n b_i\right) \le m$ represent there are at most $m$ shelters. Let $I(v_r, v_s, E')$ be an indicator function that returns one if and only if the selected edges $E'$ form a path from $v_r$ to $v_s$. The whole formula is:
\begin{align*}
    \phi(\mathbf{b}) \wedge b_i\Rightarrow& \left(\sum_{v_s \in S, E'\subseteq E} I(v_r, v_s, E') \geq 2^{q_r}\right)
\text{for } 1\le i\le n.
\end{align*}

\subsubsection{Path Encoding} 
The path indicator function $I(\cdot)$ is implemented as follows. 
Any path between 2 nodes can be mapped to a unique flow where the flow along the path is 1 and others are 0. One such unit integer flow can be represented by a binary-valued vector:
$$
\bdf \in \{(f_{e_1},f_{e_2},\dots,f_{e_M}) | f_{e_i} \in \{0,1\},\text{ for } 1\le i\le M\},
$$
where $f_{e_i} = 1$ if and only if the amount of flow on edge $e_i$ is 1. Then we can encode every path on the map using $\bdf$.
Then we can use a flow indicator function instead in the implementation, where
\begin{align*}
    I(v_r, v_s, \bdf) = \begin{cases}
        1 & \text{ vector } \bdf \text{ represents a path from $v_r$ to shelter $v_s$}, \\
        0 & \text{otherwise}.
    \end{cases}
\end{align*}
Then $I(\cdot)$ can be implemented by encoding flow constraints.

\subsubsection{Shelter Location Encoding}

The set of nodes with shelters $S$ can be further encoded by a binary vector
$$
\bx_S  = (x_1,\dots,x_N) \in \{0,1\}^{|V|},
$$
where $x_i = 1$ represents a shelter assigned to node $v_i$, i.e., $v_i \in S$. For example, 
\begin{equation}
    \bx_S = (1,1,1,\underbrace{0,\dots,0}_{\text{all $0$}})
\end{equation}
implies we place only 3 shelters on node $v_1$, $v_2$, and $v_3$.

After we properly define the notations of this problem, the shelter location problem can be formulated into an SMC:
\begin{align}
\label{eq:shelter-encoded}
    &\left( \sum_{i=1}^{|V|} x_i \leq m \right) \wedge     \bigwedge_{v_r \in R} \left(\sum_{v_s \in S}\sum_{\bdf} I( v_r,v_s, \bdf) \geq 2^{q_r}\right)
\end{align}

If problem \eqref{eq:shelter-encoded} holds true for some $S$, then the shelter assignment plan meets the safety requirement. Thus 
in \embed, we are actually solving an SAT problem:
\begin{align*}
    \left( \sum_{i=1}^{|V|} x_i \leq m \right) \wedge  \bigwedge_{v_r} \left( \underbrace{ I( v_r, v_s, \bdf) \wedge \mathtt{XOR}_1(\bdf, \bx_S) \wedge \dots \wedge \mathtt{XOR}_{q_r}(\bdf, \bx_S)}_{\text{repeat $T$ times and check the satisfiability of the majority}} \right)
\end{align*}
where we determine whether there is such a $\bdf$ and $\bx_S$ renders the boolean formula above true.


\subsubsection{Dataset} The dataset is from Shelters at Hawaii\footnote{\url{https://iu.maps.arcgis.com/apps/Styler/index.html?appid=477fae419a014a3ba7c95cdcb39fd946}}.  The case study for this assignment is from Hawaii and involves Mauna Loa volcano.  This map offers an analysis of the most practical locations for emergency shelters for displaced people within 15 minutes of their original locations.  Relationships between population, emergency shelters, and high-risk lava flow zones on the islands are analyzed. According to different choices of sub-regions, we extract 3 maps with 121, 186, and 388 nodes representing 3 difficulty levels with 237, 359, and 775 undirected edges, respectively. Those maps are included in the code repository.

\subsubsection{Evaluation Metric}
Suppose we find a shelter assignment $\{v_{s_1},\dots, v_{s_q}\}$ for residential areas $\{v_{r_1},\dots, v_{r_p}\}$. The quality of the assignment is evaluated by the total paths from the resident areas to the chosen shelters, which is computed as:
$$
\sum_{i = 1}^{p} \sum_{j = 1}^{q} \text{Number of paths from $v_{r_i}$ to $v_{s_j}$}
$$
However, the path counting problem is an intractable problem even if shelter locations are given. We use SharpSAT-TD \cite{korhonen2021integrating} (a model counting solver) to solve each counting predicate separately with given shelter locations.

\subsubsection{Choice of Baseline}

For baseline methods, shelter locations are found by local search with the shelter assignment as the search state and a path approximation estimation function as 
\begin{align*}
& h(\{v_{s_1},\dots, v_{s_q}\}) = \sum_{i = 1}^{p}  \sum_{j = 1}^{q} \text{Approximated number of paths from $v_{r_i}$ to $v_{s_j}$}    
\end{align*}
The approximated path counting is achieved by querying approximate sampling oracles including Gibbs Sampler, Unigen Sampler\footnote{\url{ https://github.com/meelgroup/unigen}}, Quick sampler\footnote{\url{https://github.com/RafaelTupynamba/quicksampler }}, etc.

\subsubsection{Hyper Parameters}
For all experiments, residential nodes are picked at node $v_{r_1} = 0$, $v_{r_2} = 10$, $v_{r_3} =20$. The maximum number of shelters is $m=5$. Parameter $T$ (for repeat experiments) is set to be 2, which empirically works well.

Due to the sensitivity of local search to the search state initialization. We initialize the starting point randomly and repeat 5 times. Each run is until the approach finds a local minimum and only those best are picked for comparison. For our \embed, we give a time limit of 12 hours. The algorithm repeatedly runs with increasing $q_r$ until it times out. The time shown in Table~\ref{tab:shelter} and the number of paths in Figure~\ref{fig:shelter} correspond to the cumulative time and the best solutions found before the algorithm times out.

\subsection{Robust Supply Chain Design} \label{apx:supply-chain}

\begin{figure}
    \centering
    \includegraphics[width=\linewidth]{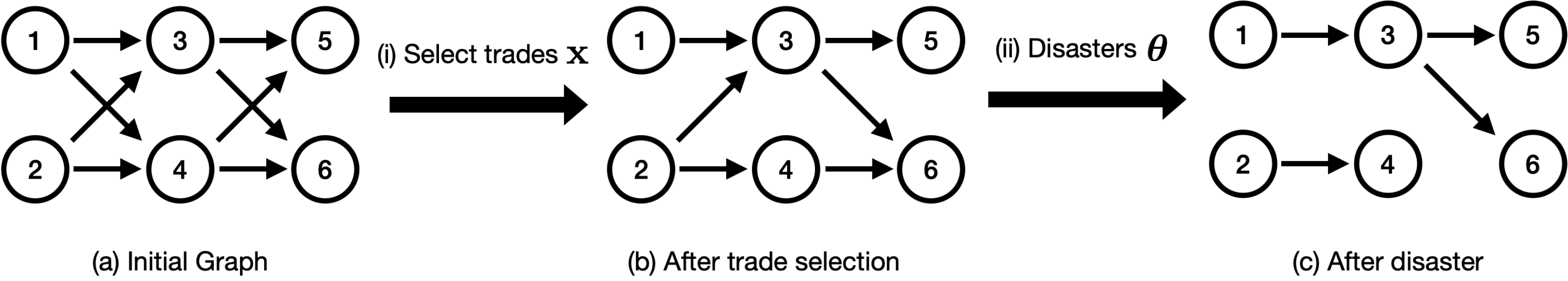}
    \caption{Example of the supply chain design application. (a) The initial supply chain where nodes 1 and 2 are suppliers of wheat; nodes 3 and 4 are producers of bread; nodes 5 and 6 are markets. (b) Graph after trade (edge) selection of each demander. (c) Under the effect of random events, e.g., natural disasters, some trades (edges) are disabled.}
    \label{fig:spc_example}
\end{figure}

Figure~\ref{fig:spc_example} shows an example of the supply chain design. 
This supply chain (network) can be formulated into a directed acyclic graph $G(V,E)$. For each node $v\in V$ in the supply chain,
(1) it represents a supplier, which consumes raw materials from upper-stream suppliers (nodes in the previous layer) and supplies products to down-stream suppliers (nodes in the following layer). The number of nodes is $|V|$. (2) Each node $v$ has a limited budget, denoted by $Budget(v)$, representing the maximum budget it can spend on purchasing raw materials. (3)  We call those nodes without upstream suppliers in the initial graph "\textbf{primary suppliers}", e.g., nodes 1 and 2 in Fig.~\ref{fig:spc_example}. Those without downstream suppliers are called "\textbf{final-tier suppliers}", e.g., nodes 5 and 6 in Fig.~\ref{fig:spc_example}.

For each edge $(u,v)\in E$ in the graph, (1) it represents the possible trades from nodes $u$ to $v$, and the direction is from a supplier to a demand. The number of edges is denoted as $|E|$. (2) Each edge $(u,v)$ has a capacity $Capacity(u,v)$ representing the maximum amount of raw materials that can flow from the supplier to the demand. (3) Each edge $(u,v)$ has a cost for the demander, denoted by $Cost(u,v)$. It represents the cost of the trade (mainly from purchase and transportation). (4) The actual amount of goods flow on each edge could be below the capacity, we use $flow(e)$ to denote the actual flow on edge $e$.

Due to the limited budget, each node $v$ can only trade with a limited number of suppliers. Denote this selection as $\bx \subseteq E$ where $(u,v)\in \bx$ if and only if the trading edge $(u,v)$ is selected. 
The trade selection is shown in Figure~\ref{fig:spc_example} (i), and the resulting graph is Figure~\ref{fig:spc_example} (b). We have to ensure $\sum_{ (u,v)\in {\bx}}Capacity(u,v)  \leq Budget(v)$ for all $v$.


A disaster is defined as: $\btheta \in \{(\theta_{u,v},\dots)|(u,v)\in E\} \subseteq \{0,1\}^{|E|}$ where $\theta_{u,v} = 1$ iff. the disaster destroies edge $(u,v)$. The disaster $\btheta$ is associated with a random distribution $\prob(\btheta)$. The random disaster happens after the trade selection. An example is in Figure~\ref{fig:spc_example} (ii) and (c). 
In our study, we address the complex scenario where a disaster might simultaneously affect multiple trades, especially those located within the same geographical region, e.g., a wheat factory and a bread market in the same city are likely to face the same natural disaster. 
Bayesian Networks are exceptionally expressive and capable of effectively representing conditional dependencies and probabilistic relationships among various entities. This makes them particularly well-suited for capturing the nuanced interdependencies among trades.
By modeling $\prob(\btheta)$ as a Bayesian Network, we can more accurately reflect the real-world scenario thus providing a more realistic case study analysis.

\subsubsection{Problem Formulation} As discussed in the main text Eq. \eqref{eq:spc_initial}, the core decision problem to determine if there is a trading plan $\bx$ that satisfies
\begin{align*} 
    &\phi(\bx) \wedge \sum_{v \text{ in final-tier suppliers }} \expect_{\btheta} \left[ \sum_{(u,v) \text{ in selected edges } \bx} flow(u,v;\btheta) \right] \geq 2^q.
\end{align*}
where the summation inside the expectation quantifies all incoming raw materials transported to $v$ on the actual trading map following trade selection $\bx$ and random disaster $\btheta$, and the total expectation term represents the expected production of node $v$ subject to disaster distribution. $\phi(\bx)$ captures the budget limitation $\sum_{ (u,v) \text{ in selected edges } {\bx}} Cost(u,v) \leq Budget(v)$ for all node $v$.

Note that if the flow on each edge $e$ can vary arbitrarily between 0 and $Capacity(e)$, then a representation of $flow(\cdot, \cdot; \btheta)$ must be included among the decision variables alongside $\bx$. This is because the same graph can have varying amounts and assignments of flow. As a result, the decision problem becomes overly complex, as encoding $flow(\cdot,\cdot; \btheta)$ for all possible graphs requires exponential space. 

To simplify, we assume that all suppliers, except for the primary and final-tier suppliers, are capable of meeting the maximum production requirement with a single incoming trade, that is, $Capacity(u,v) \geq \sum_{(v,w)\in E} Capacity(v,w), \forall (u,v) \in E$. Consequently, the amount of flow on selected trading edges must be full to ensure the maximum flow. Therefore, we can get rid of $flow$ and consider $Capacity$ directly.

\subsubsection{SMC Formulation} After simplification, we can expand the decision problem in Eq. \eqref{eq:spc_initial} as 
\begin{align*} 
    & \sum_{\btheta} \prob(\btheta) \sum_{\substack{v \text{ in final-tier suppliers,} \\\text{and }  (u,v) \text{ in selected edges } {\bx}}} Capacity(u,v) I(u,v;\btheta)  \geq 2^q 
    \\
    & \text{such that }~~ \forall v, \sum_{ (u,v) \text{ in selected edges } {\bx}} Cost(u,v) \leq Budget(v)
     \nonumber
\end{align*}
where $I (u,v;\btheta)$ is an indicator function, which is defined as follows:
\begin{align*}
    I(u,v;\btheta) = \begin{cases}
    1 & \text{If there exists a path from one primary supplier to node $v$ via edge $(u,v)$ subject to disasters}\\
    0 & \text{otherwise}.
    \end{cases}
\end{align*}

\subsubsection{Implementation of \embed} To utilize \embed, we need to transform $\prob(\btheta) \sum_{\substack{v \text{ in final-tier suppliers,} \\\text{and }  (u,v) \text{ in selected edges } {\bx}}}Capacity(u,v) I(u,v;\btheta)$ into a sum over 0/1 output function (Boolean formula).

\begin{itemize}
    \item \textbf{Discretize Probability}:
    First, let's discretize $\prob(\btheta)$. Suppose we use 4 digits $\bb = (b_0,b_1,b_2,b_3) \in \{0,1\}^4$ to represent probability from $\frac{0}{15}$ to $\frac{15}{15}$. Then we have,
    \begin{align}
    \label{eq:dis_prob}
        & \prob(\btheta) = \sum_{\bb \in \{0,1\}^4} \underbrace{\mathbf{1}\left (\frac{1}{15}(2^0 b_0 + 2^1 b_1 + 2^2 b_2 + 2^3 b_3) < \prob(\btheta) \right) }_{\text{This indicator function is $0/1$ output, and easy to be implemented in CPLEX}}
    \end{align}

    We model $\prob(\btheta)$ as a Bayesian Network, i.e., $\prob(\btheta) = \prob(\btheta') \cdot \prob(\btheta''|\btheta')$ where $\btheta'$ and $\btheta''$ are two subset of random variables of $\btheta$. 
    \begin{align}
    \label{eq:dis_prob_bayes}
        \prob(\btheta') \cdot \prob(\btheta''|\btheta') =\sum_{\bb' \in \{0,1\}^4} \sum_{\bb'' \in \{0,1\}^4} &\mathbf{1}\left (\frac{1}{15}(2^0 b_0' + 2^1 b_1' + 2^2 b_2' + 2^3 b_3') < \prob(\btheta') \right) \times \\
        & \mathbf{1}\left (\frac{1}{15}(2^0 b_0'' + 2^1 b_1'' + 2^2 b_2'' + 2^3 b_3'') < \prob(\btheta''|\btheta') \right) \nonumber
    \end{align}

    In addition, we need to code up each factor in the Bayesian network, which has a form of $\prob(\btheta')$ or $\prob(\btheta''|\btheta')$ in CPLEX. Use $\prob(\btheta''|\btheta')$ as an example, we need a CPLEX \textit{IloNumExpr}-typed numerical expression that takes the value assignment of $\btheta' \cup \btheta''$ as input and outputs the value of $\prob(\btheta''|\btheta)$. 

    For a clear demonstration, assume we have a factor $P(\theta_1 | \theta_2)$ as in Tab.~\ref{tab:spc_factor}. Then we can encode it in CPLEX as:

    \begin{table}[!t]
    \centering
    \caption{Factor $P(\theta_1 | \theta_2)$} 
    \label{tab:spc_factor}
    \begin{tabular}{|c|c|c|}
    \hline
    $\theta_1$  &  $\theta_2$  &   $\prob(\theta_1| \theta_2)$   \\
    \hline
    0   & 0  & $p_{0}$ \\
    \hline
    0   & 1  & $p_{1}$ \\
    \hline
    1   & 0  & $p_{2}$ \\
    \hline
    1   & 1  & $p_{3}$ \\
    \hline
    \end{tabular}
    \end{table}
    
    \begin{align*}
        P(\theta_1 | \theta_2) &= p_0 u_{0,0} + p_1 u_{0,1} + p_2 u_{1,0} + p_3 u_{1,1}, \\
        s.t.,\quad  & u_{0,0} + u_{0,1} = 1 - \theta_1 \\
        & u_{1,0} + u_{1,1} = \theta_1\\
        & u_{0,0} + u_{1,0} = 1 - \theta_2 \\
        & u_{0,1} + u_{1,1} = \theta_2 \\
        &u_{0,0}, u_{0,1},u_{1,0},u_{1,1} \in \{0,1\}
    \end{align*}
    where $u_{i,j}$ are assisting variables. Then a corresponding indicator function can be implemented as
    \begin{align*}
        & \mathbf{1}\left (\frac{1}{15}(2^0 b_0 + 2^1 b_1 + 2^2 b_2 + 2^3 b_3) < \prob(\theta_1|\theta_2) \right) \\ 
        \equiv & \left( \frac{1}{15}(2^0 b_0 + 2^1 b_1 + 2^2 b_2 + 2^3 b_3) < p_0 u_{0,0} + p_1 u_{0,1} + p_2 u_{1,0} + p_3 u_{1,1} \right) \wedge \\
        & (u_{0,0} + u_{0,1} = 1 - \theta_1) \wedge (u_{1,0} + u_{1,1} = \theta_1) \wedge (u_{0,0} + u_{1,0} = 1 - \theta_2) \wedge (u_{0,1} + u_{1,1} = \theta_2)
    \end{align*}

    After discretization, we could now transform the original problem into
    \begin{align*}
        \exists \bx, \quad \sum_{\btheta}\sum_{b_{\prob}} \text{Indicator function for } \prob(\btheta) \sum_{\substack{v \text{ in final-tier suppliers,} \\\text{and }  (u,v) \text{ in selected edges } {\bx}}}Capacity(u,v) I(u,v;\btheta)
    \end{align*}
     where $b_{\prob}$ represents the digits for representing $\prob(\btheta)$, i.e., $b$ in Eq.~\eqref{eq:dis_prob} and $b' \cup b''$ in Eq.~\eqref{eq:dis_prob_bayes}.
    
    \item \textbf{Discretize Capacity}: The same idea applies to capacity as well. Now we will discretize $ \sum_{\substack{v \text{ in final-tier suppliers,} \\\text{and }  (u,v) \text{ in selected edges } {\bx}}} Capacity(u,v) I(u,v;\btheta)$ as a whole. Assume this total capacity can be represented by a 6-digit binary number $\bb=(b_0,b_1,b_2,b_3,b_4,b_5)$.
    \begin{align*}
        &\sum_{\substack{v \text{ in final-tier suppliers,} \\\text{and }  (u,v) \text{ in selected edges } {\bx}}} Capacity(u,v) I(u,v;\btheta) \\
        =& \sum_{\bb \in \{0,1\}^{6}} \mathbf{1} \left( b_0 + 2b_1+...+2^5b_5 < \sum_{\substack{v \text{ in final-tier suppliers,} \\\text{and }  (u,v) \text{ in selected edges } {\bx}}} Capacity(u,v) I(u,v;\btheta) \right)
    \end{align*}

    \item \textbf{Practical example of $I(u,v;\btheta)$}: This indicator function indicates whether edge $(u,v)$ is in some path from one primary supplier to $v$. 

    Use Figure~\ref{fig:spc_example}(a) as an example. Set $v = v_5$ and examine $I(v_3,v_5)$.  
    \begin{align*}
        I(v_3,v_5; \btheta) = \underbrace{\left( I(v_1,v_3;\btheta) \vee I(v_2,v_3;\btheta) \right)}_{\substack{\text{Node $v_3$ is connected to some primary supplier} }} \wedge \underbrace{x_{v_3,v_5} \wedge \neg \theta_{v_3,v_5}}_{\text{$(v_3,v_5)$ is selected and not destroied}}
    \end{align*}
    
    \item \textbf{Overall}: After discretization and transformation, the problem is a typical \#SAT problem.
    \begin{align*}
        \exists \bx, \quad \sum_{\btheta}\sum_{\bb_{\prob}} \sum_{\bb_c} \text{A $0/1$ output function}(\bx, \btheta, \bb_{\prob}, \bb_c)
    \end{align*}
    where $\bb_{\prob}, \bb_c$ are binary variables introduced by discretizing probability and capacity.
\end{itemize}

\subsubsection{Dataset}
We evaluate our algorithm on a real-world wheat supply chain network from \cite{zokaee2017robust}, the structure of the supply chain is shown in Fig.~\ref{fig:wheat_chain}. For a better comparison, one small-scale synthetic network, in which the parameters are generated with a Gaussian distribution fitted from the real-world data. The statistics of the two supply chain networks are in Table~\ref{tab:dataset-stat}.

\begin{table}[!ht]
    \centering
    \caption{Statistics of the Wheat supply chain network. }
    \label{tab:dataset-stat}
    \begin{tabular}{c|cccc}
    \hline
         &  Wheat suppliers &  Flour factories &  Bread factories & Markets \\
    \hline
    Real-world supply network & 9 & 7  & 9 & 19 \\
        Small synthesized supply network & 4  & 4 & 5 & 5 \\
    \hline
    \end{tabular}
    
\end{table}

In our experiments, we consider different rates of disasters over the supply chain network. We randomly pick 10\%, 20\%, 30\% of edges over the total $M$ edges in the network. In each disaster distribution, The joint probability over affected edges is defined as a randomly generated Bayesian Network.

\subsubsection{Evaluation Metric}

To evaluate the efficacy of a trading plan, we calculate its empirical average of actual production under $K = 10000$ i.i.d. disasters, denoted as $\{\btheta^{(1)}, \dots, \btheta^{(K)}\}$, sampled from the ground-truth distribution. This method is adopted due to the computational infeasibility of directly calculating expectations. Formally, given a trading plan $\bx$, the performance of $\bx$ is evaluated by
\begin{align*}
     &\frac{1}{K}\sum_{k=1}^K \sum_{\substack{v \text{ in final-tier suppliers,} \\\text{and }  (u,v) \text{ in selected edges } {\bx}}} Capacity(u,v) I(u,v;\btheta)
\end{align*}

\subsubsection{Choice of Baseline}
For the baseline, we utilize Sample Average Approximation (SAA)-based methods \cite{kleywegt2002sample}. These baselines employ Mixed Integer Programming (MIP) to identify a trading plan that directly maximizes the average production across networks impacted by 100 sampled disasters. The average over samples serves as a proxy for the actual expected production. For the sampler, we consider Gibbs sampling (Gibbs-SAA), belief propagation (BP-SAA), importance sampling (IS-SAA), loopy-importance sampling, and weighted sampling (Weighted-SAA).
For a fair comparison, we imposed a time limit of 30 seconds for the small-sized network and 2 hours for the real-world network. 
The time shown in Table \ref{tab:supply} for SAA approaches is their actual execution time. Our SMC solver again executes repeatedly with increasing $q$ until it times outs. 
The time shown is the cumulative time it finds the best solution (the last one) before the time limit.

\begin{figure*}[!t]
    \centering
\includegraphics[width=0.75\linewidth]{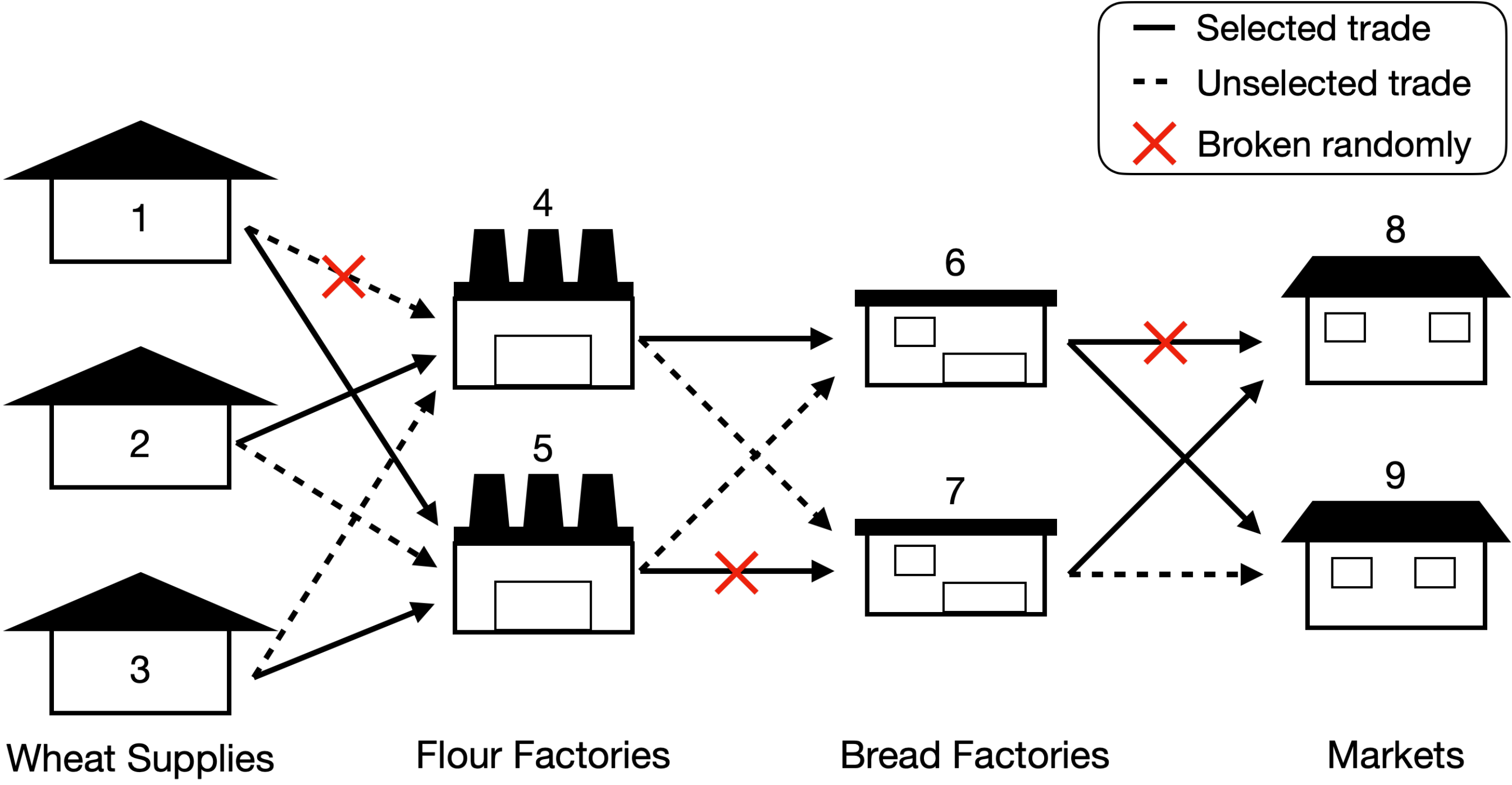}
    \caption{Example of the bread production supply chain network from \cite{zokaee2017robust}. The example includes 3 wheat suppliers, 2 flour factories, 2 bread factories, and 2 markets.
    A black solid arrow from $i$ to $j$ indicates a trade between them ($x_{i,j}=1$ in the trading plan), and a dashed arrow from $p$ to $q$ represents an unchosen trade ($x_{p,q}=0$). Each edge also incorporates a cost and capacity which are not shown in the figure. The red cross indicates the edge is destroyed due to stochastic events (with a probability specified by $\prob(\btheta)$).
    }
\label{fig:wheat_chain}
\end{figure*}

\subsubsection{Hyper Parameters}
Parameter $T$ (for repeat experiments) is set to be 2, which is sufficient to show superior performance over baselines.
$\prob(\btheta)$ is discretized into 4 bits, and capacity is discretized into 12 bits.
In the randomly generated disasters, each Bayesian network node can have at most 8 parents and the number of Bayesian network edges is around half of the maximum possible number. Those disaster distributions as well as the generation script are available in the code repository.

\end{document}